\documentclass{article}
\usepackage[final,main]{neurips_2026}

\usepackage[utf8]{inputenc}
\usepackage[T1]{fontenc}
\usepackage{amsmath,amssymb,amsfonts}
\usepackage{amsthm}
\usepackage{booktabs}
\usepackage{graphicx}
\usepackage{xcolor}
\usepackage{algorithm}
\usepackage{algorithmic}
\usepackage{hyperref}
\usepackage{cleveref}
\usepackage{microtype}
\usepackage{multirow}
\usepackage{subcaption}
\usepackage{enumitem}
\usepackage{wrapfig}

\newtheorem{theorem}{Theorem}[section]
\newtheorem{lemma}[theorem]{Lemma}
\newtheorem{proposition}[theorem]{Proposition}
\newtheorem{corollary}[theorem]{Corollary}
\newtheorem{definition}{Definition}

\newtheorem{remark}{Remark}

\crefname{theorem}{Theorem}{Theorems}
\crefname{lemma}{Lemma}{Lemmas}
\crefname{proposition}{Proposition}{Propositions}
\crefname{corollary}{Corollary}{Corollaries}
\crefname{definition}{Definition}{Definitions}
\crefname{assumption}{Assumption}{Assumptions}
\crefname{remark}{Remark}{Remarks}
\Crefname{theorem}{Theorem}{Theorems}
\Crefname{lemma}{Lemma}{Lemmas}
\Crefname{proposition}{Proposition}{Propositions}
\Crefname{corollary}{Corollary}{Corollaries}
\Crefname{definition}{Definition}{Definitions}
\Crefname{assumption}{Assumption}{Assumptions}
\Crefname{remark}{Remark}{Remarks}

\newcommand{\R}{\mathbb{R}}
\newcommand{\E}{\mathbb{E}}
\DeclareMathOperator*{\argmin}{arg\,min}

\DeclareMathOperator{\tr}{tr}
\DeclareMathOperator{\rank}{rank}
\DeclareMathOperator{\diag}{diag}

\newcommand{\nT}{T}                    
\newcommand{\nL}{L}                    
\newcommand{\nK}{K}                    
\newcommand{\dout}{d_{\mathrm{out}}}
\newcommand{\din}{d_{\mathrm{in}}}

\newcommand{\dW}{\Delta W}             
\newcommand{\dWt}{\widetilde{\Delta W}} 

\newcommand{\fnorm}[1]{\left\|#1\right\|_{\mathrm{F}}}

\title{GRAM: Group-wise Rank-Aware Model Merging via Subspace Alignment}

\renewcommand*{\thefootnote}{\fnsymbol{footnote}}
\author{%
  Di Wu\thanks{Equal contribution.} \\
  JPMorgan Chase \& Co. \\
  \texttt{di.x5.wu@chase.com} \\
  \And
  Shuaidong Pan\footnotemark[1] \\
  JPMorgan Chase \& Co. \\
  \texttt{shuaidong.pan@chase.com} \\
}

\begin{document}

\maketitle
\renewcommand*{\thefootnote}{\arabic{footnote}}
\setcounter{footnote}{0}

\begin{abstract}
Merging different models into a single model is highly desirable for multi-task deployment,
particularly in regulated domains like healthcare. In such settings, models are often locally fine-tuned due to strict data privacy policies, necessitating that aggregation operates exclusively on the fine-tuned weights.  Existing methods apply a uniform merge across all
tasks, ignoring the geometric relationships among their learned subspaces. When task updates
occupy misaligned subspaces, projecting them onto a shared low-rank basis discards
task-relevant directions, a failure mode named \emph{subspace interference}. Through this
study, we show that the per-task projection error of a merged model is governed by the
principal-angle alignment between each task and the shared subspace, so that grouping
geometrically compatible tasks provably reduces this error within each group. Motivated by
this observation, we propose \textbf{Group-wise Rank-Aware Model Merging (GRAM)}, which uses the
principal-angle geometry of task vectors to decide which tasks should be merged together
and then performs a standard SVD-based merge within each coalition; the pipeline requires
only the fine-tuned weights at merge time. With only one additional model ($\nK{=}2$),
GRAM significantly mitigates subspace interference and consistently outperforms prior
merging methods across language understanding (GLUE with Flan-T5), 8-task CLIP vision, and
medical instruction tuning (Qwen2-7B-Instruct with four English/Chinese medical LoRAs).

\end{abstract}

\section{Introduction}
\label{sec:intro}

\begin{figure}[t]
\centering
\includegraphics[width=0.9\linewidth]{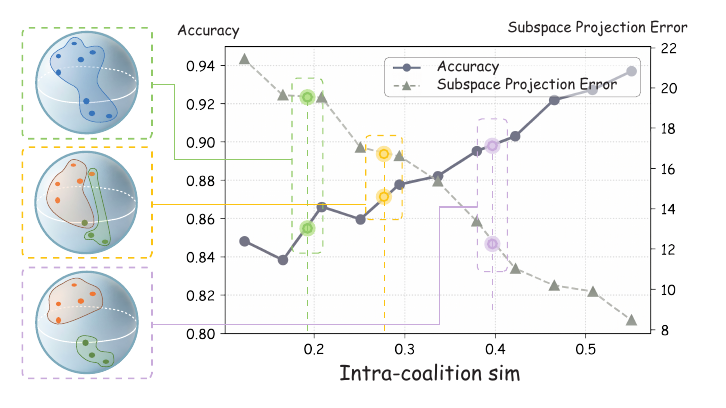}
\caption{%
  How tasks are grouped on the Grassmannian governs merging quality.
  \emph{Insets:} three partitions of the per-task low-rank subspaces visualised on the Grassmannian---all tasks merged into a single global coalition (top, low intra-coalition similarity), three groups (middle), and the two-group GRAM partition (bottom, high intra-coalition similarity).
  \emph{Curve:} as intra-coalition similarity increases, merged-model accuracy rises (left axis, solid) and the within-group subspace projection error $\sum_{\mathrm{pairs}\in G}(1{-}\cos^2\theta)$ falls (right axis, dashed).
  Highlighted markers tie each partition to its location on the trade-off curve: GRAM places geometrically compatible tasks in the same coalition, reducing the alignment penalty that global merging incurs.
}
\label{fig:overview}
\end{figure}

Model merging \citep{wortsman2022model,ilharco2022editing,tang2023concrete} aggregates multiple fine-tuned, task-specific models into a single deployable model without the need for additional training. This paradigm is highly attractive for multi-task deployment, as it circumvents the substantial costs of storing and serving independent models. The structured nature of weight updates—particularly in parameter-efficient fine-tuning (e.g., LoRA \citep{hu2022lora})—makes model merging highly effective. This approach is especially critical in privacy-constrained domains such as healthcare, where institutions independently fine-tune a shared backbone on local clinical data. Because data sharing across sites is strictly restricted, any unified multi-institution model must be constructed solely from the fine-tuned weights, precluding the use of the original training data.

Prior work on model merging spans parameter-space arithmetic with conflict resolution \citep{ilharco2022editing,yadav2023ties}, auxiliary signal coefficient learning \citep{yang2023adamerging,tang2023concrete}, and structured approximations exploiting low-rank or layer-wise properties \citep{gargiulo2025tsv,stoica2024knots,marczak2025iso,wei2025doge,sun2025lot, chen2025coefficients}. A pervasive limitation of these methods is their reliance on a uniform global merge: all tasks are compressed into a single shared space. Consequently, tasks whose learned parameter directions are least compatible with the majority suffer from disproportionate performance degradation—a phenomenon we term subspace interference. Furthermore, many existing techniques require access to task-specific calibration data or activations, rendering them inapplicable in strict privacy-preserving deployments where data cannot leave its source institution.

A critical signal overlooked by current methods is the inter-task geometry of the learned parameter subspaces. Existing techniques process updates individually or globally, systematically neglecting the pairwise geometric relationships between tasks. As illustrated in \Cref{fig:overview}, subspace similarity is highly predictive of merging compatibility: grouping tasks with high intra-coalition subspace similarity simultaneously increases merged-model accuracy and minimizes projection error. We observe distinct geometric signatures—such as cluster-outlier structures in GLUE tasks and block-diagonal affinities in vision tasks (\cref{app:exp_details}). Crucially, the outlier tasks are precisely those that disproportionately degrade standard global SVD merging, and thus benefit the most from group-wise isolation (\cref{sec:experiments}). Subspace geometry therefore carries predictive information that current methods fail to exploit.

Motivated by this geometric insight, we propose Group-wise Rank-Aware Model Merging (GRAM). GRAM targets the task-aware deployment setting, where task identity is known at inference. We reframe the merging objective from compressing $T$ models into a single model, to compressing $T$ models into $K \ll T$ models, trading a marginal increase in storage for a targeted, significant reduction in subspace interference. Crucially, GRAM is entirely data-free: the pipeline operates exclusively on the fine-tuned weights, requiring no source data, activations, or calibration sets at merge time. Our primary contributions are as follows:

\begin{enumerate}[leftmargin=*,itemsep=2pt,topsep=4pt]\item \textbf{A Geometry-Aware Merging Algorithm:} We introduce GRAM, which partitions tasks using Grassmannian $k$-means++ on chordal distances. Within each coalition, it dynamically merges models via a greedy adaptive rank allocator that pools singular values across members, achieving provable optimality under a global rank budget.\item \textbf{Theoretical Guarantees:} We rigorously bound the projection error of our clustering objective. We prove that the resulting partition yields a merge error within $O(\log K)$ of the optimal partition, and its expected error is strictly bounded below that of a standard global merge.\item \textbf{State-of-the-Art Data-Free Performance:} Extensive evaluations across language understanding, vision, and medical instruction-tuning demonstrate that GRAM consistently outperforms existing data-free baselines. Notably, on the medical benchmark, GRAM surpasses even data-dependent methods (e.g., RegMean, Fisher-Merge) that rely on per-task calibration data.\end{enumerate}
\section{Related Work}
\label{sec:related_work}

\paragraph{Model Merging.}
Model merging combines independently fine-tuned parameters into a single deployable model
without retraining. Parameter-level methods average fine-tuned weights or arithmetically
combine task vectors, sometimes with sparsification or sign/magnitude-based conflict resolution
\citep{wortsman2022model,ilharco2022editing,yadav2023ties,yu2024dare}. Coefficient-learning
methods learn per-task or per-layer mixing weights from auxiliary signals or coordinate-wise
masks \citep{yang2023adamerging,tang2023concrete}. Structured-update methods exploit the
low-rank or layered structure of fine-tuning updates, operating in the singular-value or
per-layer domain to reduce inter-task interference
\citep{gargiulo2025tsv,stoica2024knots,marczak2025iso,chen2025learning,wei2025doge,wang2024lines,sun2025lot,fu2025himerging}. Multi-model
approaches retain some per-task structure rather than collapsing to a single shared model
\citep{huang2024emr}. A common thread across these single-model approaches is that all tasks
participate in the same global merge, implicitly assuming sufficient subspace alignment---an
assumption that fails when tasks span diverse domains. We depart from it by measuring
inter-task subspace geometry directly and merging within geometrically compatible groups.

\paragraph{Task Grouping in Multi-Task Learning.}
Deciding which tasks should be trained together has a long history in multi-task learning.
\citet{zamir2018taskonomy} computationally map transfer relationships across 26 visual tasks, and
\citet{standley2020tasks} formalize the grouping problem and search over partitions to trade
accuracy against compute. Task2Vec~\citep{achille2019task2vec} represents each task by a
Fisher-information vector, while branched networks~\citep{vandenhende2019branched} and
mixture-of-experts models~\citep{chen2023modsquad} realize grouping through the architecture
itself. Affinity-based methods avoid combinatorial search:
TAG~\citep{fifty2021efficiently} estimates inter-task affinity from gradient signals in a single
training run, MTG-Net~\citep{song2022efficient} meta-learns a predictor of per-task gains across
task combinations, Grad-TAG~\citep{li2024scalable} scales affinity estimation via gradient
linearization, and~\citet{wang2024principled} cast grouping as a mathematical program with
theoretical guarantees. Complementary work on dynamic loss weighting~\citep{liu2022autolambda}
modulates task competition during training rather than partitioning tasks explicitly. Dynamic client cooperation~\citep{yang2026decentralized}
mitigates interference through adaptive coalition formation rather than explicitly partitioning tasks.
All of these approaches require training-time data and gradients, making them inapplicable to
the post-hoc merging setting where only frozen weights are available. Our grouping criterion is
data-free: it uses only the left singular subspaces of the task-specific weight updates.

\section{Preliminaries}
\label{sec:preliminaries}

\paragraph{Task vectors.}
We consider a pre-trained model with $\nL$ layers and weights
$W_0 = \{W_0^{(\ell)}\}_{\ell=1}^{\nL}$. After adapting the model to task
$i \in \{1,\dots,\nT\}$, we obtain fine-tuned weights $W_i$. Following the
task-vector view of model merging~\citep{ilharco2022editing,yadav2023ties,yu2024dare},
the adaptation is summarized by the per-layer \emph{task vector}
\begin{equation}
  \label{eq:task_vector}
  \dW_i^{(\ell)}
  \;=\; W_i^{(\ell)} - W_0^{(\ell)}
  \;\in\; \R^{\dout \times \din},
\end{equation}
i.e., the difference between the task-adapted and pre-trained weights at layer
$\ell$. The matrix formulation in \eqref{eq:task_vector} applies to \emph{linear}
layers --- MLP blocks and attention projection matrices --- which covers all merged
parameters in our experiments: the LoRA settings adapt only linear projections by
construction, and in the fully fine-tuned vision setting SVD-based merging is
applied to all 2-D weight matrices while the remaining non-matrix parameters
(LayerNorm gains, biases) are merged by simple averaging (\cref{app:impl_details}).
For models tuned with Low-Rank Adaptation (LoRA)~\citep{hu2022lora}, the
task vector inherits a rank-$r$ factorization
\begin{equation}
  \label{eq:lora}
  \tau_i^{(\ell)}
  \;=\; B_i^{(\ell)} A_i^{(\ell)},
  \qquad
  B_i^{(\ell)} \in \R^{\dout \times r},\;
  A_i^{(\ell)} \in \R^{r \times \din},\;
  r \ll \min(\dout,\din).
\end{equation}
\paragraph{SVD of task updates.}
The compact SVD of each update is
\begin{equation}
  \label{eq:svd}
  \dW_i^{(\ell)}
  = U_i^{(\ell)}\, \Sigma_i^{(\ell)}\, V_i^{(\ell)\top},
\end{equation}
where $U_i^{(\ell)} \in \R^{\dout \times r}$ contains the left singular vectors (output subspace),
$\Sigma_i^{(\ell)} \in \R^{r \times r}$ is a diagonal matrix of singular values
$\sigma_1 \geq \cdots \geq \sigma_r > 0$,
and $V_i^{(\ell)} \in \R^{\din \times r}$ contains the right singular vectors (input subspace).
We drop the layer index $(\ell)$ when the context is unambiguous.

\paragraph{Merging objective.}
Given $\nT$ task vectors, each of rank $r$ across $\nL$ layers,
the goal is to produce $\nK < \nT$ merged task vectors that minimize the total
reconstruction error across all tasks subject to a per-merge rank budget:
\begin{equation}
  \label{eq:merge_obj}
  \min_{\text{partition},\;\{\dWt_k\}_{k=1}^{\nK}}
  \;\sum_{i=1}^{\nT}
  \fnorm{\dW_i - \dWt_{\pi(i)}}^2
  \quad\text{s.t.}\quad
  \rank(\dWt_k) \le r \;\;\text{for all}\;k\in\{1,\dots,\nK\},
\end{equation}
where $\pi(i) \in \{1,\dots,\nK\}$ assigns task $i$ to its group $G_{\pi(i)}$ and
$\dWt_k \in \R^{\dout \times \din}$ is the merged update for group $G_k$. The model
deployed for group $G_k$ is, per layer,
\begin{equation}
  \label{eq:deployed_model}
  W_{G_k}^{(\ell)}
  \;=\; W_0^{(\ell)} + \dWt_k^{(\ell)},
\end{equation}
i.e., each of the $\nK$ deployed models is the pre-trained backbone plus its
group's merged task vector, and task $i$ is served by the model of group
$G_{\pi(i)}$.

\paragraph{Subspace interference.}
Let $B \in \R^{\dout \times r}$ be a shared output basis with orthonormal columns ($B^\top B = I_r$). Substituting the compact SVD $\dW_i = U_i\Sigma_i V_i^\top$, the projection error of task $i$ onto $\mathrm{col}(B)$ is
\begin{equation}
  \label{eq:proj_error}
  \mathcal{E}_i
  \;=\; \fnorm{(I - BB^\top)\,\dW_i}^2
  \;=\; \fnorm{(I - BB^\top)\,U_i\Sigma_i}^2,
\end{equation}
 We adopt $\mathcal{E}_i$ as a principled measure of inter-task interference as it directly measures the task-relevant information lost when updates are forced into a shared subspace. Crucially, $\mathcal{E}_i$ depends only on $U_i$ and $\Sigma_i$ --- not on $V_i$ --- so partitioning by output-subspace alignment captures the interference signal a shared basis can control.

To isolate the geometric contribution of subspace misalignment from the magnitude of
the task update, we drop the singular-value weighting and analyze
$\fnorm{(I - BB^\top)\,U_i}^2$. Every column of $U_i$ admits a unique orthogonal
decomposition into a component in $\text{col}(B)$ and a component in $\text{col}(B)^\perp$.
Applied column-wise,
\[
  U_i \;=\; \underbrace{BB^\top U_i}_{\in\,\text{col}(B)}
     \;+\; \underbrace{(I - BB^\top)\,U_i}_{\in\,\text{col}(B)^\perp}.
\]
The two summands live in orthogonal subspaces, so by the Pythagorean theorem (in the
Frobenius inner product),
$\fnorm{U_i}^2 = \fnorm{BB^\top U_i}^2 + \fnorm{(I - BB^\top)\,U_i}^2$. Since $B$ has
orthonormal columns, $\fnorm{BB^\top U_i}^2 = \fnorm{B^\top U_i}^2$, and
$\fnorm{U_i}^2 = r$. Rearranging,
\begin{equation}
  \label{eq:unweighted_geometry}
  \fnorm{(I - BB^\top)\,U_i}^2
  \;=\; r - \fnorm{B^\top U_i}^2.
\end{equation}
Let $\theta_1, \dots, \theta_r$ denote the principal angles between $\text{col}(U_i)$
and $\text{col}(B)$; throughout the paper we reserve $\theta$ for principal angles
and $k$ for merge groups (model weights are denoted $W$), and index principal
angles by $j$. A standard
result~\citep{golub1996matrix} states that the
singular values of $B^\top U_i$ are exactly $\{\cos\theta_j\}_{j=1}^{r}$, so
$\fnorm{B^\top U_i}^2 = \sum_{j=1}^{r}\cos^2\theta_j$. Substituting into
\eqref{eq:unweighted_geometry} and using $\sin^2\theta_j + \cos^2\theta_j = 1$,
\begin{equation}
  \label{eq:principal_angle}
  \fnorm{(I - BB^\top)\,U_i}^2
  \;=\; r - \sum_{j=1}^{r} \cos^2\theta_j
  \;=\; \sum_{j=1}^{r} \sin^2\theta_j,
\end{equation}
which is the squared chordal (projection F-) distance between the two subspaces. The
expression depends only on the relative geometry of $\text{col}(U_i)$ and
$\text{col}(B)$ and is invariant to the choice of basis. When the subspaces are
well-aligned ($\theta_j \approx 0$ for all $j$), the projection error vanishes. When
they are misaligned, important task directions are discarded by the merge. This motivates grouping tasks by
(singular-value-weighted) subspace compatibility before merging.

\section{Method: Group-wise Rank-Aware Model Merging}
\label{sec:method}

\begin{figure}[t]
\centering
\includegraphics[width=\linewidth]{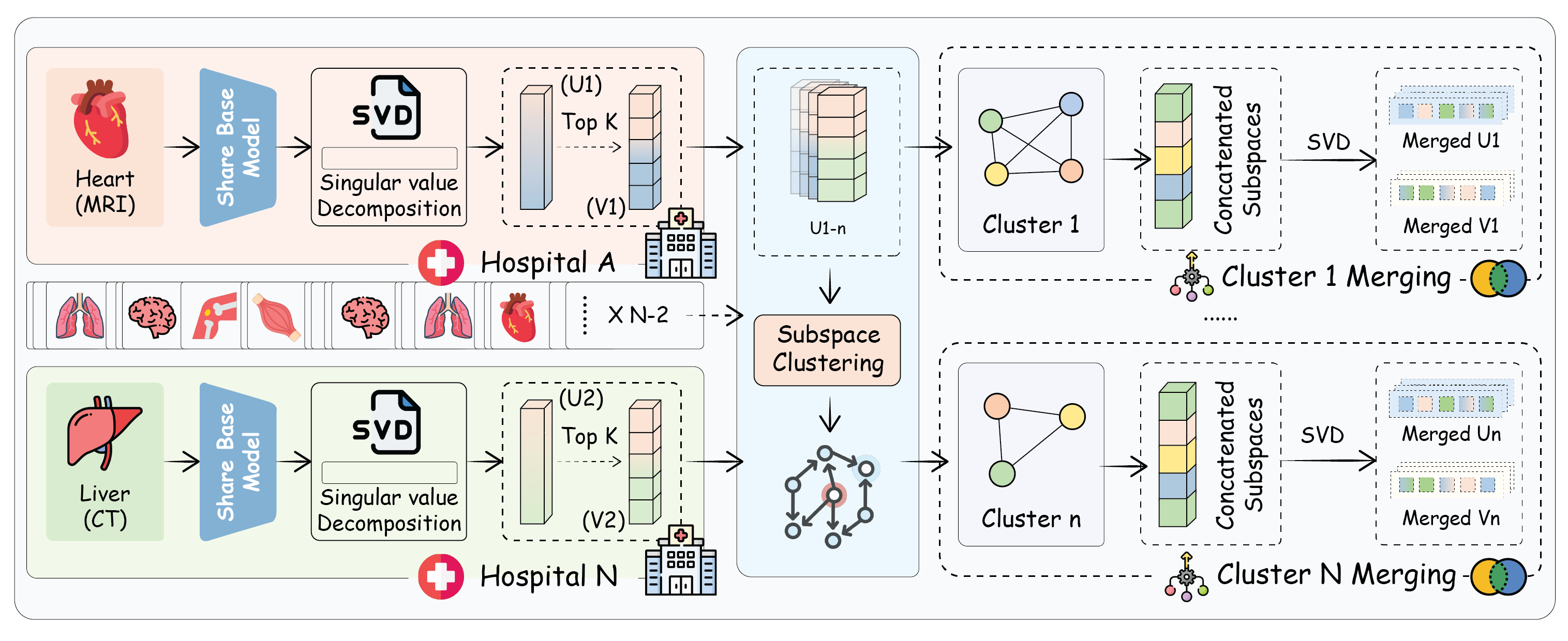}
\caption{The GRAM pipeline. Each task vector $\dW_i$ is decomposed via SVD to extract its top-$r$ left- and right-singular subspaces $(U_i, V_i)$ (left). The collection of $U_i$'s is fed into a subspace clustering step that partitions the $\nT$ tasks into $\nK$ coalitions by chordal distance (middle). Within each coalition, the selected top-$k_i$ singular components from member tasks are concatenated and re-orthonormalized via thin SVD to yield a single rank-$r$ merged update (right).}
\label{fig:gram_pipeline}
\end{figure}

Our goal is to merge $\nT$ task vectors into $\nK \ll \nT$ per-group updates in a way that protects geometrically incompatible tasks from mutual interference. 
To accomplish this, we decompose the pipeline (\cref{fig:gram_pipeline}) into three stages.
\textbf{(1) Quantifying pairwise subspace geometry.}
We compute the pairwise chordal distance between task output subspaces.
This quantity directly governs the projection error a shared basis would incur (\cref{lem:alignment_penalty}), 
so it is the natural input for the clustering step rather than coordinate-wise distances on flattened parameters.
\textbf{(2) Partitioning tasks into geometrically coherent coalitions.}
Treating each task's projection matrix $P_i = U_iU_i^\top$ as a point in Euclidean space, we run Grassmannian $k$-means++ on the chordal-distance matrix to obtain $\nK$ coalitions with low within-group and high between-group subspace distance, importing the $O(\log\nK)$ approximation guarantee of $k$-means++ into the Grassmannian setting.
\textbf{(3) Per-coalition low-rank merge.}
Within each coalition we concatenate the per-task top-$k_i$ singular triples under an adaptive rank allocator that distributes the rank budget to minimize the reconstruction error, 
then orthonormalize via a thin SVD to produce a single rank-$r$ merged update per coalition. 

\subsection{Chordal Subspace Distance}
\label{sec:similarity}

We measure the geometric compatibility between two task subspaces via the
\emph{squared chordal distance}.  At layer $\ell$, the chordal distance
between the column spaces of $U_i^{(\ell)}$ and $U_j^{(\ell)}$ is
\begin{equation}
  \label{eq:chordal_dist}
  d_c^2\!\bigl(U_i^{(\ell)},\, U_j^{(\ell)}\bigr)
  \;=\; r \,-\, \fnorm{U_i^{(\ell)\top} U_j^{(\ell)}}^2
  \;=\; \sum_{s=1}^{r} \sin^2\theta_s,
\end{equation}
where $\theta_1,\dots,\theta_r$ are the principal angles between
$\mathrm{col}(U_i^{(\ell)})$ and $\mathrm{col}(U_j^{(\ell)})$ (indexed by $s$ here
since $j$ denotes a task). By \cref{eq:principal_angle}, this quantity directly governs the projection error
incurred when two tasks share a merged basis, making it the natural metric for
deciding which tasks to group. Furthermore, we show that the chordal distance on the Grassmannian is equivalent to the squared Frobenius distance between the orthogonal projectors $P_i^{(\ell)} := U_i^{(\ell)}U_i^{(\ell)\top}$ up to a factor of $2$ (\cref{app:chordal_eucl_proof}), so off-the-shelf $k$-means++ on $\{P_i^{(\ell)}\}$ inherits its $O(\log\nK)$ approximation guarantee in the chordal geometry (\cref{app:theory}).

\subsection{Grassmannian $k$-Means++ Clustering}
\label{sec:clustering}

To obtain a single distance per task pair, we average the per-layer chordal
distance across layers:
\begin{equation}
  \label{eq:model_dist}
  \bar{d}_c^2(i,j)
  \;=\; \frac{1}{\nL}\sum_{\ell=1}^{\nL}
        d_c^2\!\bigl(U_i^{(\ell)},\, U_j^{(\ell)}\bigr).
\end{equation}
We then run $k$-means++~\citep{arthur2007kmeans} on $\bar{d}_c^2$ to partition
the $\nT$ tasks into $\nK$ groups $\{G_1,\dots,G_\nK\}$.
The number of groups $\nK$ is a user-specified \emph{compression budget}
that trades deployment cost ($\nK$ deployed models) against merging fidelity.
In our experiments (\cref{sec:experiments}), $\nK{=}2$ provides the best
cost--accuracy trade-off on all benchmarks.

\subsection{Per-Group Low-Rank Merge}
\label{sec:group_merge}

For each group $G_k$ and layer $\ell$, we merge the constituent task
vectors into a single rank-$r$ update. The procedure operates on all three
SVD components---left singular vectors $U_i$, singular values $\sigma_i$,
and right singular vectors $V_i$---and allocates rank budget adaptively
rather than uniformly across tasks.

\paragraph{Adaptive rank allocation.}
A naïve approach gives each of the $|G_k|$ tasks an equal share
$r / |G_k|$ of the rank budget. Since intrinsic rank varies sharply across
tasks and layers in practice (\cref{fig:singular_value_count}, \cref{app:rank_heterogeneity}),
this uniform allocation wastes capacity on compact tasks and starves complex ones.

We instead formulate rank allocation as a discrete optimization problem.
Let $\Delta W_i = U_i \Sigma_i V_i^\top$ be the SVD of task $i$'s
update, and let $r_i$ denote the number of singular directions allocated
to task $i$. The per-task reconstruction error after keeping the top
$r_i$ components is
\begin{equation}
  \label{eq:task_recon_error}
  \mathcal{L}_i(r_i)
  = \bigl\lVert (I - B_i^{(r_i)} {B_i^{(r_i)}}^\top)\,\Delta W_i \bigr\rVert_{\mathrm{F}}^2
  = \sum_{j > r_i} \sigma_{i,j}^2,
\end{equation}
where $B_i^{(r_i)} \in \R^{\dout \times r_i}$ contains the top-$r_i$
left singular vectors of $\Delta W_i$, and the second equality follows
from the Eckart--Young theorem. We seek the allocation that minimizes
total reconstruction error under a global budget:
\begin{equation}
  \label{eq:rank_alloc}
  \min_{\{r_i\}}
  \;\sum_{i \in G_k} \sum_{j > r_i} \sigma_{i,j}^2
  \qquad \text{s.t.}\quad \sum_{i \in G_k} r_i \le r.
\end{equation}
The marginal gain of assigning one additional rank slot to task $i$ is
$\mathcal{L}_i(r_i) - \mathcal{L}_i(r_i + 1) = \sigma_{i,\,r_i+1}^2$.
Since singular values are sorted in descending order within each task,
the optimal strategy is \emph{greedy}: pool all candidate singular
values $\{\sigma_{i,j}^2\}$ across tasks in the group, select the
top-$r$ entries, and count how many are drawn from each task. This
yields optimal per-task allocations $r_1, \dots, r_{|G_k|}$ with
$\sum_i r_i = r$, ensuring that rank budget flows to the directions
that most reduce reconstruction error.

\paragraph{Coalition merge.}
Given the per-task allocations $\{r_i\}$, we follow TSV-Merge~\citep{gargiulo2025tsv}:
concatenate the selected top-$r_i$ SVD components across tasks into
$\bar{U} \in \R^{\dout \times r}$, $\bar{s} \in \R^{r}$, and
$\bar{V} \in \R^{r \times \din}$, sort globally by singular value, replace
$\bar{U}$ and $\bar{V}^\top$ with their orthogonal polar factors $\widehat U,\widehat V$ (rather than computing an SVD of the joint operator), and reconstruct
$\dWt_k^{(\ell)} = \widehat U\,\diag(\bar{s})\,\widehat V \in \R^{\dout \times \din}$
as the task vector for group $G_k$ at layer $\ell$.

\paragraph{Theoretical guarantee for grouping.}
The grouping step is backed by a deterministic projection-error guarantee: with optimal per-group bases, any partition into $\nK$ groups achieves total reconstruction error no larger than a single global rank-$r$ basis fit to all tasks (\cref{thm:grouping} in \cref{app:theory}; the proof is a direct optimality argument independent of the clustering rule). The guarantee explains \emph{why} the empirical gains in \cref{sec:experiments} concentrate on tasks geometrically furthest from the bulk---exactly the tasks for which a partition-aware basis improves most over the global optimum, making the inequality strict and large in magnitude---while the alignment-penalty machinery (\cref{lem:alignment_penalty}) and the $k$-means++ partition error chain in \cref{app:theory} quantify \emph{how much} grouping helps.

\section{Experiments}
\label{sec:experiments}

\subsection{Setup}

We evaluate GRAM on three benchmarks: GLUE \citep{wang2018glue} with Flan-T5-Base/Large \citep{chung2024scaling}, the 8-task vision benchmark with fully fine-tuned CLIP ViT-B/32 and ViT-L/14, and medical instruction tuning on Qwen2-7B-Instruct with four LoRA experts (\cref{sec:qwen_medical}). All methods operate on the same pre-existing fine-tuned models---from FusionBench \citep{tang2025fusionbench} for GLUE and from the TSV-Merge release \citep{gargiulo2025tsv} for vision---and are re-run under a unified protocol (\cref{app:exp_details}), so cross-method differences come only from merging. We report validation accuracy on GLUE (Spearman $\rho{\times}100$ for STS-B), test accuracy on vision, and the metrics described in \cref{sec:qwen_medical} for the medical benchmark. All experiments use task-aware inference: task identity is known at test time and selects the group-specific adapter. This is a preference rather than a requirement (\cref{sec:ablation}).

For GLUE and vision, we compare against seven baselines: Weight Averaging \citep{wortsman2022model}, Task Arithmetic \citep{ilharco2022editing}, TIES-Merging \citep{yadav2023ties}, AdaMerging++ \citep{yang2023adamerging}, Iso-C and Iso-CTS \citep{marczak2025iso}, and TSV-Merge \citep{gargiulo2025tsv}. The GLUE results in \cref{tab:glue_compact} additionally include Concrete TA, Concrete AM \citep{tang2023concrete}, and DOGE TA \citep{wei2025doge}. The CLIP ViT-B/16 results in \cref{app:vit_b16_ft} additionally include LiNeS-Merge \citep{wang2024lines} and LOT-Merging \citep{sun2025lot} for completeness across recent structured-update baselines. The medical benchmark (\cref{sec:qwen_medical}) additionally compares against Fisher-Merge \citep{matena2022merging} and RegMean \citep{jin2022dataless} as data-using references. Ablation includes random-equipartition controls under the same chordal metric.

\subsection{Language Understanding}

\begin{table}[t]
  \centering
\caption{%
  GLUE results on Flan-T5-Base and Flan-T5-Large (LoRA $r{=}16$):
  validation accuracy, Spearman $\rho{\times}100$ for STS-B.
  Each cell reports Base\,/\,Large in that order.
  \textbf{Bold}: best merged result per column for each variant.
  Individual: per-task fine-tuned model (oracle upper bound).
}
\label{tab:glue_compact}
\scriptsize
\setlength{\tabcolsep}{3pt}
\renewcommand{\arraystretch}{1.05}
\resizebox{\textwidth}{!}{
\begin{tabular}{l c c c c c c c c c}
\toprule
Method & CoLA & MNLI & MRPC & QNLI & QQP & RTE & SST-2 & STS-B & Avg. \\
\midrule
Individual (oracle) & 69.1 / 80.1 & 82.7 / 88.5 & 85.5 / 89.2 & 90.9 / 94.4 & 84.0 / 87.1 & 84.4 / 91.3 & 92.9 / 95.2 & 87.4 / 90.9 & 84.6 / 89.6 \\
\midrule
Weight Averaging     & \textbf{69.7} / 74.6 & 59.7 / 84.3 & 78.9 / 84.1 & 90.1 / 92.8 & 83.8 / 86.2 & 80.5 / 85.9 & 91.2 / 94.8 & 72.0 / 88.0 & 78.2 / 86.3 \\
Task Arithmetic      & 69.5 / 76.9 & 59.0 / 85.4 & 78.7 / 85.3 & 90.1 / \textbf{93.9} & 83.9 / 85.8 & 79.4 / 88.1 & 91.5 / 95.2 & 72.9 / 87.8 & 78.1 / 87.3 \\
Concrete TA          & 69.1 / 76.6 & 58.1 / 86.3 & 78.4 / 86.3 & 89.9 / \textbf{93.9} & 83.5 / 85.9 & 79.4 / \textbf{88.4} & 91.6 / 95.2 & 73.4 / 87.9 & 78.0 / 87.5 \\
DOGE TA              & 69.1 / 78.4 & 71.9 / 88.1 & 80.9 / 86.5 & \textbf{90.3} / 93.8 & 83.5 / \textbf{86.3} & 79.8 / 87.7 & 92.5 / 95.1 & 71.1 / 87.7 & 79.9 / 88.0 \\
TIES-Merging         & 68.3 / 77.1 & 56.3 / 85.1 & 79.4 / 86.3 & 89.8 / \textbf{93.9} & 83.7 / 86.0 & 79.4 / 87.7 & 91.6 / 95.1 & 71.2 / 88.0 & 77.5 / 87.4 \\
AdaMerging++         & 69.1 / 76.7 & 60.3 / 87.6 & 78.4 / 84.8 & 90.0 / 93.8 & 83.6 / 85.9 & 79.1 / 88.1 & 91.6 / 95.2 & 74.1 / 88.6 & 78.3 / 87.6 \\
Concrete AM          & 69.0 / 76.1 & 59.4 / 87.7 & 80.1 / 85.5 & 89.9 / 93.8 & 82.9 / 85.9 & 79.1 / 88.1 & 91.7 / 95.4 & 75.4 / 87.1 & 78.5 / 87.8 \\
TSV-Merge            & 69.2 / 79.2 & 74.7 / 86.0 & 80.9 / 87.3 & 90.2 / 93.6 & 83.6 / 85.6 & 78.7 / 87.0 & 92.7 / 95.3 & 83.2 / 88.7 & 81.7 / 87.8 \\
Iso-C                & 69.1 / 73.8 & 59.9 / 64.2 & 77.9 / 82.4 & 88.9 / 92.2 & 82.9 / 86.0 & \textbf{80.9} / 82.7 & 91.5 / 92.8 & 66.1 / 86.4 & 77.2 / 82.5 \\
Iso-CTS              & 69.1 / 73.8 & 58.3 / 59.1 & 77.0 / 82.1 & 88.5 / 91.8 & 82.2 / 85.8 & 80.5 / 82.3 & 91.4 / 92.4 & 64.0 / 86.0 & 76.4 / 81.7 \\
\midrule
\textbf{GRAM} (ours) & 69.5 / \textbf{79.3} & \textbf{82.8} / \textbf{89.0} & \textbf{84.3} / \textbf{88.7} & 90.2 / 93.8 & \textbf{84.0} / 85.7 & 79.8 / 88.1 & \textbf{93.2} / \textbf{95.4} & \textbf{86.7} / \textbf{89.9} & \textbf{83.8} / \textbf{88.7} \\
\bottomrule
\end{tabular}
}
\end{table}

\Cref{tab:glue_compact} presents GLUE results on Flan-T5-Base and Flan-T5-Large (LoRA $r{=}16$). We fix $\nK{=}2$ throughout to keep the budget overhead minimal---a single extra deployed model relative to the $\nK{=}1$ baselines---and ask whether this small extra resource yields meaningful gains. On Flan-T5-Base, GRAM reaches 83.8\% average, $+2.1$ over the strongest baseline TSV-Merge (81.7\%). On Flan-T5-Large, the gap between methods narrows in the higher-capacity regime, but GRAM still leads the field at 88.7\%, $+0.9$ over TSV-Merge (87.8\%)---all with only $2$ deployed models and no per-task calibration data.

The per-task profile is consistent with the alignment-penalty mechanism of \cref{lem:alignment_penalty}: tasks at large principal angles from the bulk cluster bear the largest projection error under a global merge and stand to recover the most under per-coalition merging. MNLI and STS-B sit furthest from the \{MRPC, QNLI, QQP, RTE\} cluster in subspace geometry (\cref{app:exp_details}), and they are the two tasks where GRAM recovers the most relative to TSV-Merge at both scales (MNLI $74.7{\to}82.8$ base, $86.0{\to}89.0$ large; STS-B $83.2{\to}86.7$ base, $88.7{\to}89.9$ large). On QQP, GRAM attains the highest base accuracy ($84.0$); on SST-2 it is within $0.1$ of the strongest baseline.

\subsection{Vision}

\begin{table}[t]
  \centering
\caption{%
  Eight-task vision results on fully fine-tuned CLIP ViT-B/32 and ViT-L/14: test accuracy (\%).
  Each cell reports B/32\,/\,L/14 in that order.
  \textbf{Bold}: best merged result per column for each variant.
  Individual: per-task fine-tuned model (oracle upper bound).
}
\label{tab:clip_vit_ft_compact}
\scriptsize
\setlength{\tabcolsep}{3pt}
\renewcommand{\arraystretch}{1.05}
\resizebox{\textwidth}{!}{
\begin{tabular}{l c c c c c c c c c}
\toprule
Method & MNIST & SVHN & GTSRB & DTD & Cars & RESISC45 & EuroSAT & SUN397 & Avg. \\
\midrule
Individual (oracle) & 99.7 / 99.8 & 97.6 / 98.1 & 99.1 / 99.2 & 97.7 / 98.5 & 78.2 / 92.2 & 95.9 / 97.3 & 99.8 / 99.8 & 75.3 / 82.0 & 92.9 / 95.9 \\
\midrule
Weight Averaging     & 86.5 / 96.8 & 62.7 / 77.7 & 53.3 / 68.9 & 54.7 / 66.8 & 62.6 / 81.7 & 71.9 / 82.3 & 71.0 / 88.7 & 65.1 / 71.5 & 66.0 / 79.3 \\
Task Arithmetic      & 96.9 / 99.0 & 78.1 / 86.0 & 68.2 / 84.9 & 59.6 / 72.2 & 56.2 / 81.9 & 68.1 / 86.3 & 76.0 / 91.9 & 57.3 / 72.9 & 70.1 / 84.4 \\
TIES-Merging         & 97.0 / 99.0 & 80.5 / 87.6 & 69.3 / 83.7 & 66.2 / 77.5 & 61.1 / 83.3 & 73.2 / 87.7 & 79.4 / 93.8 & 62.7 / 74.0 & 73.7 / 85.8 \\
AdaMerging++         & 98.2 / 99.0 & 90.1 / 90.2 & 73.2 / 95.7 & 81.3 / 93.7 & 69.2 / 88.6 & 78.2 / 91.5 & 93.9 / 97.2 & 65.7 / 74.7 & 81.2 / 91.3 \\
TSV-Merge            & 99.3 / \textbf{99.6} & 91.5 / 95.1 & 91.8 / 96.3 & 83.8 / 93.4 & 69.5 / 89.3 & 84.8 / 93.2 & 94.1 / 98.5 & 65.6 / 76.8 & 85.0 / 92.8 \\
Iso-C                & 98.9 / 99.3 & 87.0 / 90.5 & 91.0 / 95.8 & 88.1 / 96.1 & 71.0 / 91.4 & 87.0 / 94.7 & 92.4 / 98.8 & 69.1 / 79.7 & 85.6 / 93.3 \\
Iso-CTS              & 98.9 / 99.4 & 84.9 / 91.2 & 91.6 / 96.6 & 89.0 / \textbf{97.0} & 71.4 / \textbf{91.9} & 88.4 / \textbf{95.4} & 92.5 / 99.1 & \textbf{69.4} / \textbf{80.2} & 85.8 / 93.9 \\
\midrule
\textbf{GRAM} (ours) & \textbf{99.4} / \textbf{99.6} & \textbf{94.2} / \textbf{96.4} & \textbf{94.6} / \textbf{98.3} & \textbf{90.4} / 95.5 & \textbf{73.1} / 90.6 & \textbf{89.4} / 94.4 & \textbf{99.1} / \textbf{99.3} & 69.1 / 78.2 & \textbf{88.7} / \textbf{94.0} \\
\bottomrule
\end{tabular}
}
\end{table}

\Cref{tab:clip_vit_ft_compact} reports per-task accuracy on fully fine-tuned CLIP ViT-B/32 and
ViT-L/14. GRAM attains the highest average accuracy on both architectures (88.7\% B/32,
94.0\% L/14), surpassing Iso-CTS (85.8\% / 93.9\%) and TSV-Merge (85.0\% / 92.8\%). The pairwise projection-error matrix $\sum_s\sigma_{i,s}^2\sin^2\theta_s$ on the 8 vision tasks (App.\ \ref{app:vit_proj_error}) shows a clear block-diagonal structure under $\nK{=}2$, with high-chordal-distance pairs routed into different groups. The accuracy gains track this structure: GRAM is strongest on GTSRB (94.6 / 98.3) and EuroSAT (99.1 / 99.3), which sit furthest from the visual-recognition cluster (DTD/Cars/RESISC45/SUN397) and pay the largest off-diagonal penalty under $\nK{=}1$. The vision gap to the next-best baseline ($+2.9$ on B/32) exceeds the NLU gap ($+2.1$ on Flan-T5-Base), suggesting vision tasks present more heterogeneous subspaces where partitioning offers more benefit.

\subsection{Medical Instruction Tuning}
\label{sec:qwen_medical}

\begin{table}[t]

  \centering
\caption{%
  Medical merging on Qwen2-7B-Instruct with four LoRA experts.
  MedQA / CMExam: accuracy;
  HealthcareMagic / cMedQA2: BLEU-4 / ROUGE-1/2/L.
  \textit{Avg Impr.}: mean relative improvement over Pretrained across $10$ columns;
  \textit{Avg Rank.}: mean per-column rank ($1$ = best).
  \textbf{Bold} / \underline{underline}: best / second-best per column among merging methods.
  Individual: per-task fine-tuned oracle.
}
\label{tab:qwen_medical_main}
\renewcommand{\arraystretch}{1.25}
\resizebox{\textwidth}{!}{
\begin{tabular}{l c c cccc cccc cc}
\toprule
& MedQA & CMExam & \multicolumn{4}{c}{HealthcareMagic} & \multicolumn{4}{c}{cMedQA2} & & \\
\cmidrule(lr){4-7} \cmidrule(lr){8-11}
Method & Acc. & Acc. & BLEU-4 & ROUGE-1 & ROUGE-2 & ROUGE-L & BLEU-4 & ROUGE-1 & ROUGE-2 & ROUGE-L & Avg Impr. & Avg Rank. \\
\midrule
Pretrained & 51.92 & 74.77 & 1.05 & 25.90 & 3.27 & 12.65 & 9.77 & 40.30 & 19.05 & 20.94 & -- & -- \\
Individual (oracle) & 60.25 & 88.63 & 5.43 & 30.60 & 8.07 & 18.48 & 11.17 & 41.71 & 19.23 & 24.02 & +69.62\% & 4.55 \\
\midrule
Weight Averaging & 56.64 & 85.63 & 4.49 & 30.71 & 7.13 & 17.70 & 10.78 & 41.26 & 20.08 & 22.28 & +55.23\% & 7.65 \\
Task Arithmetic & 57.97 & \underline{87.69} & 5.00 & 30.33 & 7.72 & 18.11 & 5.55 & 32.70 & 12.02 & 19.15 & +49.40\% & 7.05 \\
TIES-Merging & 58.05 & 87.69 & 4.23 & \underline{30.72} & 7.03 & 17.58 & 11.30 & 42.06 & 20.60 & 22.85 & +54.18\% & 6.10 \\
Fisher-Merge & \underline{60.25} & 85.65 & 2.30 & 27.87 & 4.58 & 14.59 & 12.01 & \textbf{44.06} & 21.36 & 24.04 & +27.18\% & 6.25 \\
RegMean & \textbf{60.80} & \textbf{88.21} & \underline{5.51} & \textbf{31.19} & \underline{8.21} & \underline{18.58} & 11.39 & 41.75 & 20.25 & 23.05 & \underline{+71.48\%} & \underline{3.40} \\
Iso-C & 58.05 & 86.68 & 4.52 & 29.58 & 7.12 & 17.60 & 9.78 & 39.60 & 17.93 & 23.65 & +53.47\% & 7.85 \\
Iso-CTS & 57.82 & 86.74 & 4.54 & 29.73 & 7.20 & 17.77 & 11.97 & 42.84 & 20.28 & \textbf{25.26} & +59.11\% & 5.40 \\
TSV-Merge & 56.64 & 86.72 & 5.12 & 30.61 & 7.71 & 18.10 & \underline{12.55} & \underline{43.84} & \textbf{21.84} & 24.40 & +67.81\% & 4.35 \\
\midrule
\textbf{GRAM} (ours) & 58.52 & 87.60 & \textbf{5.62} & 30.70 & \textbf{8.30} & \textbf{18.68} & \textbf{12.67} & 43.68 & \underline{21.49} & \underline{25.01} & \textbf{+75.54\%} & \textbf{2.40} \\
\bottomrule
\end{tabular}
}
\end{table}

We set up a case study of the medical scenario from \cref{sec:intro}: four publicly available LoRA experts on Qwen2-7B-Instruct, fine-tuned for distinct medical sub-domains across two task families and two languages: MedQA (English MCQA), CMExam (Chinese MCQA), HealthcareMagic (English doctor-patient dialogue), and cMedQA2 (Chinese doctor-patient dialogue). MCQA tasks are scored by accuracy; open-ended tasks by BLEU-4 and ROUGE-1/2/L, yielding $10$ heterogeneous metric columns. Like the data-free baselines (Weight Averaging, Task Arithmetic, TIES-Merging, TSV-Merge, Iso-C, Iso-CTS), GRAM requires no calibration data. We additionally include Fisher-Merge \citep{matena2022merging} and RegMean \citep{jin2022dataless} as data-using references that consume a standard $64$-sample-per-task calibration set.

\Cref{tab:qwen_medical_main} reports per-metric results. With experts spanning two languages and two task families (MCQA / dialogue), plain averaging visibly interferes across the gap---Task Arithmetic regresses on cMedQA2 BLEU-4 ($5.55$ vs Pretrained $9.77$) and on its ROUGE columns. GRAM's $\nK{=}2$ coalitions route subspace-distant pairs into separate groups and attain the best Avg Impr ($+75.54\%$) and Avg Rank ($2.40$) of any merging method, ahead of even the data-using RegMean ($+71.48\%$, $3.40$) with its $64$-sample-per-task calibration set. Gains concentrate on the open-ended generation columns (both BLEU-4 fields; HealthcareMagic ROUGE-2 / ROUGE-L), where preserving task-specific singular directions outweighs calibrated reweighting; on the two MCQA accuracy columns---the regime where calibration-based methods are strongest---GRAM trails the leader by under one point. The summary-metric edge also extends past the per-task Individual oracle ($+69.62\%$, $4.55$), consistent with recent findings that LLM merging can match or exceed fine-tuned baselines on this benchmark family \citep{fu2025himerging}.

\subsection{Ablation Studies}
\label{sec:ablation}

\begin{wraptable}{r}{0.45\textwidth}
  \vspace{-\baselineskip}
  \centering
  \caption{%
    Effects of the proposed modules. Subscripts (\textcolor{red!70!black}{red}) report the change relative to full GRAM.
  }
  \label{tab:ablation}
  \small
  \setlength{\tabcolsep}{4pt}
  \begin{tabular}{lcc}
    \toprule
    Model & GLUE & ViT-B/32 \\
    \midrule
    \textbf{GRAM} & \textbf{83.8} & \textbf{88.8} \\
    \midrule
    $-$ PA-Sim     & $80.8_{\,\textcolor{red!70!black}{-3.0}}$ & $88.3_{\,\textcolor{red!70!black}{-0.5}}$ \\
    $-$ Coalition  & $81.7_{\,\textcolor{red!70!black}{-2.1}}$ & $85.4_{\,\textcolor{red!70!black}{-3.4}}$ \\
    $-$ Adapt-Rank & $82.2_{\,\textcolor{red!70!black}{-1.6}}$ & $88.7_{\,\textcolor{red!70!black}{-0.1}}$ \\
    \bottomrule
  \end{tabular}
  \vspace{-\baselineskip}
\end{wraptable}
\paragraph{Effects of each module.}
\Cref{tab:ablation} evaluates the contribution of each GRAM component to overall performance. Starting from full GRAM, we remove one component at a time and report the benchmark average for both PEFT merging (Flan-T5-Base on GLUE) and full-model merging (CLIP ViT-B/32 on the 8-task vision benchmark). Removing \textbf{PA-Sim} replaces the chordal subspace metric with an alternative that ignores cross-task geometry, breaking the alignment-penalty signal that drives the partition. Removing \textbf{Coalition} collapses GRAM to $\nK{=}1$ and recovers the global TSV-Merge baseline. Removing \textbf{Adapt-Rank} substitutes a uniform $r/|G_k|$ budget for the adaptive allocator, wasting capacity on sharp-spectrum tasks and starving long-tail ones. Each component is crucial, but their relative importance flips across modalities: on GLUE, PA-Sim drives the largest drop ($-3.0$), followed by Coalition ($-2.1$) and Adapt-Rank ($-1.6$); on ViT-B/32, Coalition dominates ($-3.4$) while PA-Sim ($-0.5$) and Adapt-Rank ($-0.1$) are small --- the heterogeneous LoRA spectra make adaptive allocation vital for PEFT but largely cosmetic for the more uniform full-fine-tuned vision deltas. With all components combined, GRAM boosts TSV-Merge by $+2.1$ on GLUE and $+3.8$ on ViT-B/32, demonstrating the complementarity of these components.
\paragraph{Sensitivity to the number of groups $\nK$.}
\Cref{tab:k_sweep} evaluates GRAM for every $\nK$ from $1$ to $\nT{=}8$ on Flan-T5-Base and Flan-T5-Large under one unified protocol (caption). At both scales the cost--accuracy frontier has a sharp knee at $\nK{=}2$: moving from Weight Averaging---the simplest single-model baseline---to GRAM at $\nK{=}2$ gains $+5.6$ (Base) and $+2.4$ (Large) points, capturing $87\%$ and $73\%$ of the respective gaps to the no-merging oracle, for a single extra rank-16 adapter ($0.7\%$ / $0.6\%$ of backbone parameters). Raising $\nK$ from $2$ all the way to $7$ adds at most $+0.1$ (Base) and $+0.6$ (Large) on top---on Base the average is not even monotone, dipping at $\nK{=}4$. Most of the achievable gain is therefore already available at small $\nK$: GRAM's geometry-aware clustering identifies compatible coalitions efficiently enough that a large $\nK$ is not needed to capture most of the gains in this setting. The practical rule is budget-first---deploy the smallest $\nK$ the serving budget allows; $\nK{=}2$, our default, captures the bulk of the gain at the tightest possible budget.

\begin{table}[t]
  \centering
\caption{%
  Clustering-algorithm ablation (Flan-T5-Base / GLUE): four ways of producing the
  $\nK{=}2$ partition from the \emph{same} subspace geometry, plus the random
  control, under identical merging and evaluation.
}
\label{tab:clustering_ablation}
\small
\setlength{\tabcolsep}{5pt}
\resizebox{\textwidth}{!}{
\begin{tabular}{l l c}
\toprule
Partitioning of the same subspace geometry & $\nK{=}2$ partition & Avg. \\
\midrule
GRAM & \{CoLA, MRPC, QNLI, QQP, RTE, SST-2\} $\mid$ \{MNLI, STS-B\} & \textbf{83.8} \\
Agglomerative (average linkage)          & identical partition & 83.8 \\
Max-modularity communities (exact, all 127 bipartitions) & \{CoLA, MNLI, SST-2, STS-B\} $\mid$ \{MRPC, QNLI, QQP, RTE\} & 82.9 \\
Random equipartition                     & random $4/4$ & 82.4 \\
No grouping ($\nK{=}1$, re-merged)       & all tasks & 81.9 \\
\bottomrule
\end{tabular}
}
\end{table}

\paragraph{Robustness of the partition.}
\Cref{tab:clustering_ablation} compares four ways of producing the $\nK{=}2$ partition from the \emph{same} subspace geometry. The partition is stable: it is identical across 10 initialization seeds at both $\nK{=}2$ and $\nK{=}3$ ($10/10$), and an independent hierarchy-based algorithm (agglomerative, average linkage) reproduces GRAM's partitions exactly at $\nK \in \{2,3,5,6,7\}$, with the exact max-modularity communities coinciding for every $\nK \ge 3$. The partition structure is thus a property of the geometry, not of a particular clustering algorithm---the gain comes from the subspace signal itself. The one disagreement is informative: at $\nK{=}2$ the degree-corrected modularity objective prefers a $4/4$ split that pulls MNLI into the low-similarity cluster and scores $0.9$ lower ($82.9$), whereas the solution used by GRAM respects the raw block structure of the similarity matrix and keeps the stronger $6/2$ partition.

\paragraph{Deployment without oracle task identity.}
Task-aware inference is a preference, not a requirement: a lightweight Mahalanobis task mapper \citep{lee2018unified}, as deployed in SKILL \citep{ge2023lightweight}---class-conditional Gaussians in the frozen base encoder's feature space, $5$ training samples per class, no gradient training, no test-time task labels---routes $100.0\%$ of validation examples to the correct group on Flan-T5-Base / GLUE at $\nK{=}2$, so task-agnostic GRAM \emph{equals} oracle GRAM ($83.8$ on every task); group-routing accuracy stays ${\ge}99.98\%$ for every $\nK$. Its only errors (QQP$\leftrightarrow$MRPC, $0.18\%$ of QQP) are absorbed by the coalition structure, since GRAM groups together precisely the similar tasks a router confuses (\cref{app:router}).

\begin{wrapfigure}{r}{0.48\textwidth}
  \vspace{-\baselineskip}
  \centering
  \includegraphics[width=\linewidth]{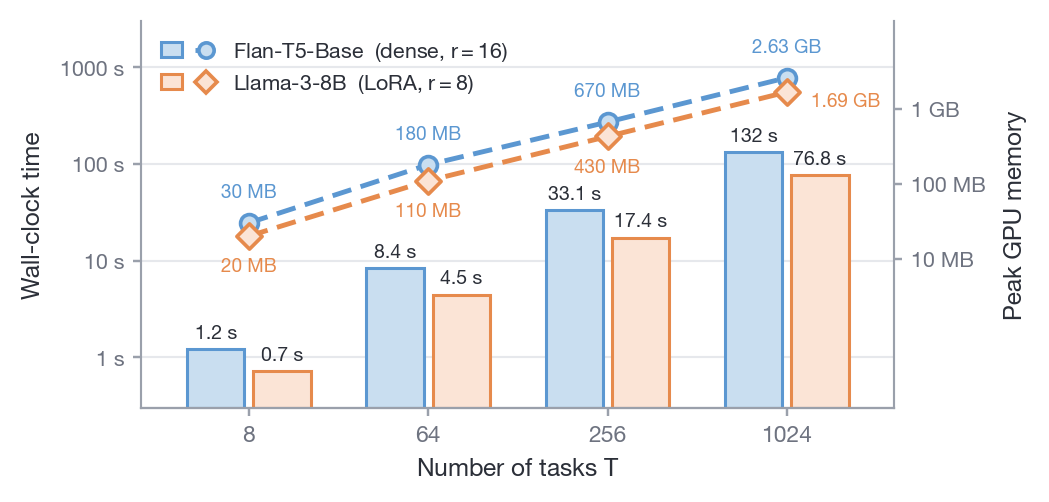}
  \caption{%
    Grouping cost vs.\ number of tasks $\nT$ on one RTX 3080 (bars: wall-clock
    time; lines: peak GPU memory). $\nT{=}8$ is the real task set; larger $\nT$
    uses synthetic tasks with the same layer shapes and rank.}
  \label{fig:scaling_T}
  \vspace{-\baselineskip}
\end{wrapfigure}

\paragraph{Scalability of the grouping step.}
A natural concern is whether grouping becomes a bottleneck when the number of tasks is
large. \Cref{fig:scaling_T} measures the end-to-end cost of grouping on a single RTX 3080
as $\nT$ grows from 8 to 1024. No benchmark offers thousands of adapters, so beyond our
real tasks we use perturbed copies of them; grouping cost depends only on adapter shapes,
not on their values, so these copies are a faithful proxy. Grouping is cheap: the real
8-task setting takes about one second, and 1024 tasks finish within a few minutes. Both
time and memory grow roughly linearly with $\nT$, because decomposing each adapter
dominates the cost and the pipeline processes one layer at a time. Llama-3-8B appears
 cheaper than Flan-T5 only because its LoRA factors are used directly rather than dense
 per-layer deltas.

\section{Conclusion}
\label{sec:conclusion}

GRAM partitions tasks by chordal subspace distance via Grassmannian $k$-means++ and merges within each group with a standard SVD-based procedure, avoiding the projection loss of global merging on misaligned subspaces. On GLUE with Flan-T5-Base/Large, GRAM ($\nK{=}2$) reaches 83.8\%/88.7\% average, matching the strongest baseline at the cost of one extra deployed model. On 8-task vision (CLIP ViT-B/32, ViT-L/14), GRAM reaches 88.7\%/94.0\%, the highest among compared methods. On the medical benchmark (Qwen2-7B-Instruct with four LoRA experts), GRAM attains the best Avg Rank without using any per-task calibration data. Gains concentrate on tasks whose subspaces are most dissimilar from the majority---precisely the tasks that the alignment penalty of \cref{lem:alignment_penalty} predicts will benefit most from grouping.

\paragraph{Limitations.}
Our results have several scope limits: (i)~single-seed reporting, so small Flan-T5-Large margins should be read with caution; (ii)~one model family per modality (Flan-T5 for NLU, CLIP for vision, Qwen2 for medical), and the medical evaluation is a case study on public benchmarks rather than a real federated experiment from multiple sites; (iii)~the theory analyzes idealized objects (optimal basis, Grassmannian $k$-means objective) rather than GRAM's exact column-stacked merge with adaptive allocation, so it should be read as geometric motivation rather than end-to-end optimality guarantee.

\paragraph{Future work.}
Scaling the task-agnostic router of \cref{app:router} beyond small task collections---where group routing approaches full task identification \citep{ge2023lightweight}---and extending the medical evaluation to a real multi-site federated setup would broaden the deployment story.

\bibliographystyle{plainnat}
\bibliography{references}

\newpage
\appendix
\section{Broader Impact}
\label{app:broader_impact}

GRAM provides a way to combine independently fine-tuned task vectors into a single deployable model without retraining or pooling raw training data, and the geometry-aware grouping step is what makes this scale beyond a handful of tasks. We highlight two settings where this property carries concrete societal value.

\paragraph{Healthcare and other privacy-constrained domains.}
Clinical AI is trained on data that cannot legally or ethically be centralized: electronic health records, radiology images, pathology slides, and clinical-note adapters live behind institutional firewalls and HIPAA/GDPR boundaries. Today, a hospital that fine-tunes a foundation model on its own patient cohort cannot easily benefit from another hospital's adapter for a related task (a different imaging modality, a different patient population, or a different downstream label) without either sharing the underlying data or running an expensive joint training run. GRAM operates entirely on \emph{released task vectors}---the parameter deltas, never the data---and merges them by clustering subspaces that are geometrically compatible. This lets a coalition of institutions pool model expertise across complementary specialties (e.g., radiology + pathology + clinical-note triage) while keeping each institution's data on-premises, and without forcing dissimilar tasks into a single averaged model where high-stakes minority tasks would otherwise be drowned out by majority gradients. The same property applies to other privacy-sensitive verticals: legal, financial, and on-device personalization.

\paragraph{Decentralized and community-driven model merging.}
The open-model ecosystem now hosts hundreds of thousands of community-released LoRA adapters and full fine-tunes per base model, but practitioners have no principled way to combine more than a few at a time. Naive averaging across many adapters destroys task-specific structure; per-adapter routing keeps a linear deployment cost. GRAM's coalition structure is a middle path: a small number $\nK$ of merged checkpoints that respect subspace geometry, with the compression budget $\nK$ left as an explicit knob. This matches the way decentralized model development actually proceeds---asynchronous, uncoordinated, contributors who never share data---and gives downstream users a way to consume many adapters as a few well-chosen merged models, lowering the deployment cost of community fine-tuning without forcing centralization.

\paragraph{Risks and limitations.}
Two risks are worth flagging. First, merging task vectors does not erase the biases or harmful behaviors of any constituent model; coalitions of biased adapters can yield biased merged checkpoints, and the chordal-distance grouping is geometric, not semantic, so it offers no protection against value misalignment in the inputs. Practitioners should audit each merged coalition with the same fairness and safety scrutiny they would apply to a single fine-tune. Second, GRAM lowers the cost of redistributing capabilities encoded in fine-tuned adapters, including capabilities the original fine-tuner did not intend to release. Standard adapter-licensing and provenance practices---adapter cards, licensing terms, watermarking---apply unchanged, and we encourage their use when GRAM is deployed in shared settings.

\section{Deferred Theoretical Analysis}
\label{app:theory}

We collect the geometric and complexity results that motivate GRAM but are deferred from the main text for space. \Cref{thm:grouping} below is the headline guarantee that grouping reduces total projection error (referenced from \cref{sec:ablation}); the results that follow extend the picture to the GRAM column-stacked basis, the partition-level error chain that ties the chordal clustering rule to a $k$-means++ approximation guarantee, the computational cost, and a Rademacher-complexity bound. \Cref{eq:principal_angle} in \cref{sec:preliminaries} already establishes that, for any orthonormal $B\in\R^{\dout\times r}$, $\fnorm{(I-BB^\top)U_i}^2 = \sum_{j=1}^{r}\sin^2\theta_j$ --- the squared chordal distance between $\mathrm{col}(U_i)$ and $\mathrm{col}(B)$. All proofs are in \cref{app:proofs}.

\begin{theorem}[Grouping reduces total error]
\label{thm:grouping}
Let $\nT$ tasks be partitioned into $\nK$ groups $\{G_1,\ldots,G_\nK\}$. For each group $G_k$, let $B^{(k)} \in \R^{\dout \times r}$ be the optimal rank-$r$ basis minimizing $\sum_{i \in G_k}\fnorm{(I - BB^\top)\dW_i}^2$ over orthonormal $B$, and let $B^* \in \R^{\dout \times r}$ be the analogous global optimum for all $\nT$ tasks. Then
\begin{equation}
\label{eq:group-bound}
\sum_{k=1}^{\nK}\sum_{i \in G_k}
\fnorm{(I - B^{(k)}B^{(k)\top})\dW_i}^2
\;\leq\;
\sum_{i=1}^{\nT}
\fnorm{(I - B^*B^{*\top})\dW_i}^2,
\end{equation}
with strict inequality whenever some per-group optimal basis $B^{(k)}$ improves on the global optimum $B^*$ within group $G_k$. The proof (\cref{app:grouping_proof}) is a direct optimality argument and does not depend on the specific clustering rule.
\end{theorem}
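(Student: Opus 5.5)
The plan is a direct optimality comparison: use the global basis $B^*$ as a feasible candidate inside each group. Define for any orthonormal $B\in\R^{\dout\times r}$ and any index set $S$ the group objective
\[
F_S(B) \;=\; \sum_{i\in S}\fnorm{(I-BB^\top)\dW_i}^2 .
\]
The right-hand side of \eqref{eq:group-bound} is $F_{\{1,\dots,\nT\}}(B^*)$. Since $\{G_k\}$ is a partition, each task appears in exactly one group, so this sum splits as
\[
F_{\{1,\dots,\nT\}}(B^*) \;=\; \sum_{k=1}^{\nK} F_{G_k}(B^*).
\]

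Next, fix a group $k$. By definition, $B^{(k)}$ minimizes $F_{G_k}$ over the same feasible set of orthonormal $\dout\times r$ matrices, and $B^*$ lies in that set. Hence
\[
F_{G_k}(B^{(k)}) \le F_{G_k}(B^*).
\]
Summing over $k$ gives the left-hand side bounded by $\sum_k F_{G_k}(B^*)$, which equals the right-hand side.

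Existence of the minimizers should be noted so that they are well defined. The feasible set, a Stiefel manifold, is compact and $F_S$ is continuous. One can also identify the minimizer concretely: $F_S(B)=\sum_{i\in S}\fnorm{\dW_i}^2-\tr(B^\top M_S B)$ with $M_S=\sum_{i\in S}\dW_i\dW_i^\top$, so a minimizer is given by the top-$r$ eigenvectors of $M_S$ (Ky Fan). Non-uniqueness is harmless, since only optimal values enter the argument.

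For strictness, suppose that for some group $k_0$ we have $F_{G_{k_0}}(B^{(k_0)})<F_{G_{k_0}}(B^*)$; this is exactly the hypothesis "$B^{(k_0)}$ improves on $B^*$ within $G_{k_0}$". Then that one summand inequality is strict while the others remain weak, so the summed inequality is strict.

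There is no real obstacle here. The only points needing care are making the hypothesis for strict inequality precise in these terms, and using that the partition is disjoint and covering so the global sum splits exactly.
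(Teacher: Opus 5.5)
Your proposal is correct and follows the paper's proof: plug the global optimum $B^*$ into each group's objective as a feasible basis, use the fact that the partition is disjoint and covering to split the global sum, and get strictness from a single strictly improving summand. Your remarks on existence of the minimizers (compactness of the Stiefel manifold, the Ky Fan eigenvector characterization) are a harmless addition that the paper leaves implicit.
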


\begin{theorem}[Direct projection-error bound for the GRAM per-group basis]
\label{thm:gram_basis}
Let $G_k = \{1,\ldots,m\}$ be a coalition with rank allocation $(k_1,\ldots,k_m)$ satisfying $0 \le k_i \le r$ and $\sum_{i=1}^{m}k_i \le r$ (the budget enforced by the GRAM allocator of \cref{sec:group_merge}), and let
$$M^{(k)} \;:=\; \bigl[\,U_1^{(k_1)}\;\big|\;\cdots\;\big|\;U_m^{(k_m)}\,\bigr] \;\in\; \R^{\dout\times R}, \qquad R=\sum_{i}k_i \le r,$$
be the column-wise concatenation of the per-task top-$k_i$ left singular vectors of $\dW_i$, with the convention that $U_i^{(0)}$ is an empty matrix (no contribution). Let $\widehat U^{(k)} \in \R^{\dout\times r}$ be any orthonormal basis whose column space contains $\mathrm{col}(M^{(k)})$ (e.g.\ the GRAM merge basis, padded if $R<r$). Then for every task $i \in G_k$,
\begin{equation}
\label{eq:gram_basis_bound}
\bigl\|(I - \widehat U^{(k)}\widehat U^{(k)\top})\,\dW_i\bigr\|_F^2 \;\le\; \sum_{\ell=k_i+1}^{r}\sigma_{i,\ell}^2,
\end{equation}
where the right-hand side reduces to $\fnorm{\dW_i}^2$ when $k_i=0$.
\end{theorem}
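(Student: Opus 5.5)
The argument is a containment-plus-monotonicity argument. It uses only that orthogonal projection onto a larger subspace never increases the residual, together with the Eckart--Young identity already used in \eqref{eq:task_recon_error}.

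Fix $i \in G_k$ and write $P := \widehat U^{(k)}\widehat U^{(k)\top}$. Let $Q_i := U_i^{(k_i)}U_i^{(k_i)\top}$ be the orthogonal projector onto the span of the top-$k_i$ left singular vectors of $\dW_i$, with $Q_i = 0$ when $k_i=0$.

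First, since $\mathrm{col}(U_i^{(k_i)}) \subseteq \mathrm{col}(M^{(k)}) \subseteq \mathrm{col}(\widehat U^{(k)})$, we have $PQ_i = Q_i$. Consequently $(I-P)(I-Q_i) = I-P$, which gives
\[
(I-P)\dW_i = (I-P)(I-Q_i)\dW_i.
\]

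Second, $I-P$ is an orthogonal projector, so its operator norm is at most $1$. This yields
\[
\fnorm{(I-P)\dW_i}^2 \le \fnorm{(I-Q_i)\dW_i}^2 .
\]

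Third, evaluate the right-hand side using the compact SVD \eqref{eq:svd}. Because $Q_i U_i\Sigma_i V_i^\top$ keeps exactly the first $k_i$ rank-one terms $\sigma_{i,\ell}u_{i,\ell}v_{i,\ell}^\top$, the residual $(I-Q_i)\dW_i$ equals $\sum_{\ell>k_i}\sigma_{i,\ell}u_{i,\ell}v_{i,\ell}^\top$. By orthonormality its squared Frobenius norm is $\sum_{\ell=k_i+1}^{r}\sigma_{i,\ell}^2$, exactly as in \eqref{eq:task_recon_error}. When $k_i=0$ this sum is $\fnorm{\dW_i}^2$, which matches the stated convention.

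There is no real obstacle; the only care needed is bookkeeping. The budget $\sum_i k_i\le r$ matters only to guarantee that an $r$-column orthonormal $\widehat U^{(k)}$ containing $\mathrm{col}(M^{(k)})$ exists. This holds because $\rank M^{(k)}\le R\le r \le \dout$: take an orthonormal basis of $\mathrm{col}(M^{(k)})$ and pad it to $r$ columns. Nothing requires $M^{(k)}$ itself to have orthonormal or independent columns, since only the containment of column spaces is used.

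For the GRAM merge basis specifically, I would note that the polar factor of $\bar U$ has the same column space as $\bar U = M^{(k)}$ (up to column ordering) whenever $\bar U$ has full column rank. In that case it qualifies directly as $\widehat U^{(k)}$. If $\bar U$ is rank-deficient, the statement's hypothesis must be checked separately, or the basis padded as described.
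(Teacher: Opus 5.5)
Your proof is correct and follows the same strategy as the paper: use $\mathrm{col}(U_i^{(k_i)})\subseteq\mathrm{col}(\widehat U^{(k)})$ to compare against task $i$'s own top-$k_i$ projector, then read off the Eckart--Young tail. The only difference is that you prove the monotonicity step on the residual side, via $(I-P)=(I-P)(I-Q_i)$ and $\|I-P\|_{\mathrm{op}}\le 1$, whereas the paper works on the captured-energy side, via the Pythagorean identity and the Loewner bound $P\succeq Q_i$. Your closing caveat is unnecessary. For any thin SVD $\bar U = u s v^\top$, the polar factor $uv^\top$ has orthonormal columns and satisfies $\mathrm{col}(uv^\top)=\mathrm{col}(u)\supseteq\mathrm{col}(\bar U)$ even when $\bar U$ is rank-deficient, so the GRAM basis always meets the containment hypothesis.
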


\subsection{Grassmannian $k$-Means++ Error Bound}
\label{sec:alignment_penalty}

\begin{lemma}[Alignment penalty for two equally weighted subspaces]
\label{lem:alignment_penalty}
Let $U,V \in \R^{\dout\times r}$ have orthonormal columns and principal angles $0 \le \theta_1 \le \cdots \le \theta_r \le \tfrac{\pi}{2}$. Fix $\sigma > 0$. Then for every matrix $B \in \R^{\dout \times r}$ with $B^\top B = I_r$,
\begin{equation}
\label{eq:alignment_penalty}
\bigl\|(I-BB^\top)\sigma U\bigr\|_F^2 + \bigl\|(I-BB^\top)\sigma V\bigr\|_F^2
\;\ge\; \sigma^2 \sum_{j=1}^{r}\bigl(1 - \cos\theta_j\bigr),
\end{equation}
with equality when the columns of $B$ are the top-$r$ left singular vectors of $[\,U \mid V\,]$.
\end{lemma}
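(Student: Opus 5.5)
The plan is to turn the left-hand side of \eqref{eq:alignment_penalty} into a trace-maximization problem, then read off the spectrum of the sum of the two projectors from the principal-angle geometry. By \eqref{eq:unweighted_geometry} applied to $U$ and to $V$, for any orthonormal $B$ we have
\[
\bigl\|(I-BB^\top)\sigma U\bigr\|_F^2 + \bigl\|(I-BB^\top)\sigma V\bigr\|_F^2
= \sigma^2\bigl(2r - \tr(B^\top M B)\bigr),
\qquad M := UU^\top + VV^\top .
\]
So minimizing the left-hand side over $B$ amounts to maximizing $\tr(B^\top M B)$ over $\dout\times r$ matrices with orthonormal columns. By Ky Fan's maximum principle, this maximum equals the sum of the top $r$ eigenvalues of the symmetric PSD matrix $M$. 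It is attained by the top-$r$ eigenvectors of $M$. Since $M=[\,U\mid V\,][\,U\mid V\,]^\top$, these are exactly the top-$r$ left singular vectors of $[\,U\mid V\,]$, which gives the equality case.

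The core step is computing the spectrum of $M$. I will use principal vectors, obtained from the SVD $U^\top V = P\,\diag(\cos\theta_j)\,Q^\top$. Set $u_j = UPe_j$ and $v_j = VQe_j$. Then the $u_j$ are orthonormal, the $v_j$ are orthonormal, and $u_j^\top v_k = \delta_{jk}\cos\theta_j$. Because $UU^\top=\sum_j u_ju_j^\top$ and $VV^\top=\sum_j v_jv_j^\top$, $M$ decomposes as a sum of blocks $u_ju_j^\top+v_jv_j^\top$, each acting on $\Pi_j=\mathrm{span}\{u_j,v_j\}$. These planes are mutually orthogonal for distinct $j$: for $j\ne k$ we have $u_j\perp u_k$ and $v_j\perp v_k$, and the cross terms $u_j\perp v_k$ vanish because $u_j^\top v_k=\delta_{jk}\cos\theta_j$.

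On a nondegenerate plane, the $2\times2$ Gram computation gives eigenvalues $1\pm\cos\theta_j$. In the degenerate case $\theta_j=0$, $u_j=v_j$ and the block contributes the eigenvalue $2$ together with a $0$. In every case the nonzero spectrum of $M$ is contained in the multiset $\{1+\cos\theta_j,\;1-\cos\theta_j\}_{j=1}^r$, padded with zeros.

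Since $1+\cos\theta_j\ge 1\ge 1-\cos\theta_{j'}$ for all $j,j'$, with ties only at $\theta=\pi/2$ where both values equal $1$, the top $r$ eigenvalues sum to $\sum_j(1+\cos\theta_j)$. Therefore
\[
\min_B\ \text{LHS} = \sigma^2\Bigl(2r-\sum_{j=1}^r(1+\cos\theta_j)\Bigr) = \sigma^2\sum_{j=1}^r(1-\cos\theta_j),
\]
which yields both the inequality and the equality claim.

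The main obstacle is handling the principal-vector block decomposition rigorously: the orthogonality of distinct planes, the degenerate angles $\theta_j=0$ and $\theta_j=\pi/2$, and the tie-breaking in the choice of the top $r$ eigenvectors. For this I will rely on the standard CS/SVD characterization of principal angles \citep{golub1996matrix} already invoked in \cref{sec:preliminaries}.
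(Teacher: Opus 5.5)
Your proof is correct and follows the paper's argument: the same rewriting of the left-hand side as $\sigma^2\bigl(2r-\tr(B^\top MB)\bigr)$ with $M=UU^\top+VV^\top$, Ky Fan's maximum principle, the spectrum $\{1\pm\cos\theta_j\}$, and the observation that the top $r$ eigenvalues are the $1+\cos\theta_j$. The only cosmetic difference is where the spectrum is computed: the paper uses the $2r\times 2r$ Gram matrix $[\,U\mid V\,]^\top[\,U\mid V\,]$, block-diagonalized via the SVD of $U^\top V$, while you split $M$ directly in $\R^{\dout}$ into orthogonal principal-vector planes, which makes the degenerate angles $\theta_j\in\{0,\pi/2\}$ explicit.
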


\paragraph{Deterministic latent-group model.}
There are $G \ge 2$ latent groups with prototype subspaces $Q_1,\ldots,Q_G \in \R^{\dout\times r}$, each with orthonormal columns, satisfying $\theta_j(Q_g, Q_h) \ge \phi_{\min} > 0$ for all $g\neq h$ and $j=1,\ldots,r$. There are $\nT = mG$ tasks with $m \ge 2$ per group; task $i$ in group $g(i)$ has $\Sigma_i = \sigma I_r$ and $d_c^2(U_i, Q_{g(i)}) \le \epsilon$. For a partition $\pi=\{G_1,\ldots,G_\nK\}$, define the \emph{optimal-basis merge error}
\begin{equation}
\label{eq:err_def}
  \mathrm{Err}(\pi) \;:=\; \sum_{k=1}^{\nK}\min_{B\in\R^{\dout\times r},\,B^\top B = I_r}\sum_{i\in G_k}\bigl\|(I-BB^\top)\dW_i\bigr\|_F^2.
\end{equation}
Under $\Sigma_i = \sigma I_r$, the optimal per-group basis is the chordal Fr\'{e}chet mean on $\mathrm{Gr}(r,\dout)$ (\cref{prop:frechet}), and $\mathrm{Err}(\pi) = \sigma^2\Phi(\pi)$ with the Grassmannian $k$-means cost $\Phi(\pi) := \sum_{k}\min_{C\in\mathrm{Gr}(r,\dout)}\sum_{i\in G_k}d_c^2(U_i, C)$. The bounds below compare partitions through the projection error of the \emph{optimal} per-group basis ($\mathrm{Err}(\pi)$); they should be read as guarantees on the partition objective and how grouping shapes it, not as guarantees on GRAM's actual column-stacked basis. \Cref{thm:gram_basis} provides the algorithmic-side bound on the GRAM basis itself, and \cref{app:frechet_proof} discusses when the column-stacked basis coincides with the Fr\'{e}chet mean and when it merely upper-bounds it. Let $\pi_{++}$ denote the partition from $k$-means++ on $\{P_i = U_iU_i^\top\}$, $\pi_{\mathrm{rand}}$ a uniformly random equipartition into $G$ groups of size $m$, $\pi^* := \argmin_{\pi}\Phi(\pi)$, and $\pi_{\mathrm{global}}$ the trivial single-block partition $(\nK=1)$.

\begin{theorem}[$k$-means++ partition error bounds]
\label{thm:expected_gap}
Under the deterministic latent-group model with $\nK = G$:
\begin{enumerate}[nosep,leftmargin=*]
  \item \emph{($k$-means++ upper bound on the optimal-basis error.)}
  \begin{equation}\label{eq:upper}
    \E\bigl[\mathrm{Err}(\pi_{++})\bigr] \;\le\; 8(\ln \nK + 2)\cdot\mathrm{Err}(\pi^*) \;\le\; 8(\ln\nK+2)\,\sigma^2\nT\epsilon.
  \end{equation}
  \item \emph{(Random partition lower bound, valid for $\epsilon\le r(1-\cos\phi_{\min})/4$.)}
  \begin{equation}\label{eq:lower}
    \E\bigl[\mathrm{Err}(\pi_{\mathrm{rand}})\bigr] \;\ge\; G\!\left(1-\frac{G}{\binom{\nT}{m}}\right)\sigma^2\bigl[r(1-\cos\phi_{\min}) - 4\sqrt{r\epsilon}\,\bigr].
  \end{equation}
  \item \emph{(Global merge lower bound, valid for $\epsilon\le r(1-\cos\phi_{\min})/4$.)}
  \begin{equation}\label{eq:global_lb}
    \mathrm{Err}(\pi_{\mathrm{global}}) \;\ge\; \lfloor \nT/2\rfloor\,\sigma^2\bigl[r(1-\cos\phi_{\min}) - 4\sqrt{r\epsilon}\,\bigr].
  \end{equation}
\end{enumerate}
\end{theorem}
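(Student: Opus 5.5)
We prove the three parts separately. Each one reduces $\mathrm{Err}(\pi)=\sigma^2\Phi(\pi)$ to a statement about chordal $k$-means on the projectors $P_i=U_iU_i^\top$. We take as given the identity of the earlier section: $\|P_i-P_j\|_F^2=2d_c^2(U_i,U_j)$, with $\|P_i\|_F^2=r$.

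\emph{Part 1.}
\begin{itemize}
\item The basis-restricted cost $\min_{C\in\mathrm{Gr}}\sum_{i\in G_k}d_c^2(U_i,C)$ differs from the Euclidean $k$-means cost on $\{P_i\}$, because the Euclidean mean $\bar P$ of a cluster is not a projector.
\item Write $d_c^2(U_i,C)=\tfrac12\|P_i-P_C\|_F^2$. The Euclidean cost with free centers then lower-bounds $2\Phi$, and $k$-means++ gives its $8(\ln K+2)$ guarantee for that Euclidean objective.
\item To return to $\Phi$, use the bound $\min_C\sum_{i}\|P_i-P_C\|_F^2\le 2\sum_{i}\|P_i-\bar P\|_F^2$. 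It follows by taking $P_C$ to be the nearest rank-$r$ projector to $\bar P$, which is at least as close to $\bar P$ as any $P_i$, and then applying the triangle inequality.
\item This loses a constant. I would instead use that the centers chosen by $k$-means++ are data points $P_i$, which are projectors: seed-based clusters then have Grassmannian centers directly. The $D^2$-sampling analysis of Arthur--Vassilvitskii compares against optimal clusters using only the relaxed triangle inequality for $\|\cdot\|_F^2$ and the variance decomposition around $\bar P$; since $\Phi^*\ge$ the Euclidean optimum$/2$, the constants match.
\item This constant bookkeeping is the main obstacle. If it fails I would state the bound with an extra factor $2$.
\item The second inequality is immediate: the latent partition with centers $Q_g$ gives $\Phi(\pi^*)\le\sum_i d_c^2(U_i,Q_{g(i)})\le T\epsilon$.
\end{itemize}

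\emph{Parts 2–3: a pairwise separation bound.} Take $i,j$ in different latent groups.
\begin{itemize}
\item Chordal distance $\sqrt{d_c^2}$ is a metric, so $d_c(U_i,U_j)\ge d_c(Q_g,Q_h)-2\sqrt\epsilon$.
\item Squaring gives $d_c^2(U_i,U_j)\ge d_c^2(Q_g,Q_h)-4\sqrt{\epsilon}\,d_c(Q_g,Q_h)\ge\sum_j\sin^2\phi_{\min}-4\sqrt{r\epsilon}$, using $d_c\le\sqrt r$.
\item Lemma~\ref{lem:alignment_penalty}, however, controls the two-task optimal error by $\sum_j(1-\cos\theta_j)$ rather than by $\sin^2$.
\item So I would apply Lemma~\ref{lem:alignment_penalty} directly to $U_i,U_j$ and perturb the angles. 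The map $U\mapsto\sum_j\cos\theta_j(U,V)$ equals the nuclear norm $\|U^\top V\|_*$. Moving $U$ from $Q_g$ to $U_i$ changes $U^\top V$ by at most $\|P_{U_i}-P_{Q_g}\|_F\le\sqrt{2\epsilon}$ in Frobenius norm, hence the nuclear norm by at most $\sqrt{2r\epsilon}$. Doing this for both endpoints gives $\sum_j(1-\cos\theta_j(U_i,U_j))\ge r(1-\cos\phi_{\min})-4\sqrt{r\epsilon}$ with room to spare.
\item Since restricting the basis to a subset of tasks only lowers the minimum, any group containing a cross-latent pair $\{i,j\}$ pays at least $\sigma^2[r(1-\cos\phi_{\min})-4\sqrt{r\epsilon}]$. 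The hypothesis $\epsilon\le r(1-\cos\phi_{\min})/4$ is only there to make this nonnegative, or at least meaningful.
\end{itemize}

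\emph{Part 3.} For the single global block, choose $\lfloor T/2\rfloor$ disjoint pairs, each spanning two different latent groups; this is possible because $G\ge2$ and the groups are balanced. Summing the pairwise bound over these pairs lower-bounds the global optimal cost, since a single basis serves all pairs.

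\emph{Part 2.} A random equipartition block is \emph{pure}, meaning it equals one latent group, with probability at most $G/\binom{T}{m}$. Every impure block contains a cross pair and so pays at least one pair penalty. Linearity of expectation over the $G$ blocks then gives the stated bound.
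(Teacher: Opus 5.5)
Your proof is correct. Part~1, Part~3, and the block-counting in Part~2 follow the paper's argument. The genuine difference is the per-pair separation bound $\sum_s(1-\cos\theta_s(U_i,U_j))\ge r(1-\cos\phi_{\min})-4\sqrt{r\epsilon}$.

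The paper gets this bound from the Procrustes distance $\delta_p(U,V)=[2\sum_s(1-\cos\theta_s)]^{1/2}=\min_{R\in O(r)}\|U-VR\|_F$, which is a metric. It combines three facts: $1-\cos t\le\sin^2 t$, which gives $\delta_p(U_i,Q_{g(i)})\le\sqrt{2\epsilon}$; the prototype separation $\delta_p^2(Q_g,Q_h)\ge 2r(1-\cos\phi_{\min})$; and the triangle inequality. It then squares and drops the cross term. The squaring step is exactly where the hypothesis $\epsilon\le r(1-\cos\phi_{\min})/4$ enters, and dropping the cross term produces the constant $4$.

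Your route uses Lipschitz continuity of $\sum_s\cos\theta_s(U,V)=\|U^\top V\|_*$ in each argument. It is a linear perturbation with no squaring, so it gives the sharper bound $r(1-\cos\phi_{\min})-2\sqrt{2r\epsilon}$ for every $\epsilon$, and the side condition becomes irrelevant. Note that this condition does not actually make the bracket nonnegative; that would need $\epsilon\le r(1-\cos\phi_{\min})^2/16$. This is harmless: when the bracket is negative, Parts~2--3 hold trivially because $\mathrm{Err}\ge 0$, so your ``at most $G/\binom{T}{m}$'' is fine. The paper's route buys a clean metric-space picture on the Grassmannian. Yours buys a better constant and no restriction on $\epsilon$.

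One step needs tightening. $U^\top V$ depends on the chosen bases, so replacing $Q_g$ by $U_i$ does not in general change it by at most $\|P_{U_i}-P_{Q_g}\|_F$; take $U_i=-Q_g$. You can fix this in either of two ways:
\begin{enumerate}
\item Procrustes-align the basis of $U_i$ first. This leaves the nuclear norm unchanged, and the aligned difference has Frobenius norm $\delta_p(U_i,Q_g)\le\sqrt{2\epsilon}$.
\item Work with $P_UP_V$, which has the same nuclear norm as $U^\top V$ and rank at most $r$. Then $\bigl|\|P_{U_i}P_V\|_*-\|P_{Q_g}P_V\|_*\bigr|\le\sqrt{r}\,\|P_{U_i}-P_{Q_g}\|_F\le\sqrt{2r\epsilon}$.
\end{enumerate}

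Finally, your hedging in Part~1 is unnecessary. The Arthur--Vassilvitskii theorem already bounds the expected cost of the seeding itself, with the seeds as centers, and the seeds are projectors. So there is nothing to reopen in the $D^2$-sampling analysis. The factor $\tfrac12$ from $d_c^2=\tfrac12\|P_i-P_j\|_F^2$ cancels the factor $2$ in $\mathrm{OPT}_E\le 2\Phi(\pi^*)$ exactly, and no extra factor of $2$ arises.
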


\begin{corollary}[Projection error chain]
\label{cor:err_chain}
Under the deterministic latent-group model with $\nK=G$, in the regime $\epsilon \le c\,r(1-\cos\phi_{\min})^2/(m\ln \nK)$ for a small absolute constant $c>0$,
\begin{equation}\label{eq:err_chain}
  \E\bigl[\mathrm{Err}(\pi_{++})\bigr] \;\le\; \E\bigl[\mathrm{Err}(\pi_{\mathrm{rand}})\bigr] \;\le\; \mathrm{Err}(\pi_{\mathrm{global}}).
\end{equation}
The right inequality is deterministic and follows from groupwise optimality (\cref{thm:grouping}); the left inequality follows from comparing parts~(1) and~(2) of \cref{thm:expected_gap}, with the regime chosen so that the random lower bound exceeds the GRAM upper bound.
\end{corollary}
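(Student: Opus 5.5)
The corollary chains two inequalities, and I will prove them separately.

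\emph{Right inequality.} I will show that any fixed equipartition $\pi$ satisfies $\mathrm{Err}(\pi)\le\mathrm{Err}(\pi_{\mathrm{global}})$; taking the expectation over $\pi_{\mathrm{rand}}$ then preserves the inequality. This is exactly \cref{thm:grouping}. For each block $G_k$, the global optimum $B^*$ is a feasible basis, so
\[
\min_B\sum_{i\in G_k}\fnorm{(I-BB^\top)\dW_i}^2\le\sum_{i\in G_k}\fnorm{(I-B^*B^{*\top})\dW_i}^2 .
\]
Summing over $k$ gives the bound, and no regime assumption is needed for this half.

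\emph{Left inequality.} The plan is to compare the upper bound of part~(1) of \cref{thm:expected_gap} with the lower bound of part~(2). It suffices to show
\[
8(\ln\nK+2)\,\sigma^2\nT\epsilon\;\le\;G\Bigl(1-\tfrac{G}{\binom{\nT}{m}}\Bigr)\sigma^2\bigl[r(1-\cos\phi_{\min})-4\sqrt{r\epsilon}\bigr].
\]
Write $a:=r(1-\cos\phi_{\min})$, and recall $\nT=mG$ and $\nK=G$. Cancelling $\sigma^2 G$, the inequality becomes
\[
8m(\ln G+2)\,\epsilon\;\le\;\Bigl(1-\tfrac{G}{\binom{\nT}{m}}\Bigr)\bigl(a-4\sqrt{r\epsilon}\bigr).
\]
I will then bound each factor on the right in turn.

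\begin{itemize}
\item \textbf{Combinatorial factor.} Since $G\ge2$ and $m\ge2$, we have $\binom{mG}{m}\ge\binom{2m}{m}\ge 6\ge 3G$ whenever $G\le 2$. For general $G$, I would use $\binom{mG}{m}\ge\binom{2G}{2}=G(2G-1)\ge 3G$. Either way the factor is at least $2/3$.
\item \textbf{Bracket.} I need $4\sqrt{r\epsilon}\le a/2$, i.e.\ $\epsilon\le a^2/(64r)=r(1-\cos\phi_{\min})^2/64$. This follows from the regime $\epsilon\le c\,r(1-\cos\phi_{\min})^2/(m\ln\nK)$ once $c$ is small, provided $m\ln\nK$ is bounded below. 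When $\ln\nK<1$ I would absorb this into the constant by reading $\ln\nK$ as $\max(1,\ln\nK)$. The same condition also implies the validity condition $\epsilon\le a/4$ required by part~(2), because $1-\cos\phi_{\min}\le1$.
\end{itemize}

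With both bounds in hand, the right-hand side is at least $a/3$. It remains to check $8m(\ln G+2)\,\epsilon\le a/3$. The regime gives $m\epsilon\le c\,a(1-\cos\phi_{\min})/\ln G\le c\,a/\ln G$, and $(\ln G+2)/\ln G$ is bounded by an absolute constant, again with the $\max(1,\cdot)$ convention. So choosing $c$ small enough, roughly $c\le 1/200$, closes the argument.

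The main obstacle is this constant bookkeeping, especially at small $G$ where $\ln\nK$ is small and the stated regime degenerates. I would handle it by making the $\max(1,\ln\nK)$ convention explicit rather than changing the statement. Everything else is a direct invocation of \cref{thm:grouping} and \cref{thm:expected_gap}.
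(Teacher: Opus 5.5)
Your proposal is correct and follows the paper's proof: the right inequality comes from plugging the global optimum $B^*$ into each block (\cref{thm:grouping}). The left inequality comes from comparing the absolute bound of part~(1) with the lower bound of part~(2), bounding the combinatorial factor below by a constant (you get $2/3$, the paper uses $1/2$), and splitting the regime into the bracket condition $\epsilon\le r(1-\cos\phi_{\min})^2/64$ plus the condition that $8m(\ln\nK+2)\epsilon$ is a small fraction of $r(1-\cos\phi_{\min})$. The $\max(1,\ln\nK)$ convention you flag as the main obstacle is unnecessary: since $\nK=G\ge 2$ gives $\ln\nK\ge\ln 2$, both $m\ln\nK$ and $(\ln\nK+2)/\ln\nK$ are already bounded by absolute constants (the paper uses $\ln\nK+2\le 4\ln\nK$), so the factor is simply absorbed into $c$.
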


\Cref{cor:err_chain} captures the headline message: refining the trivial single-block merge into $\nK$ groups via $k$-means++ on subspace similarities is never worse than a random equipartition (in expectation, in the small-$\epsilon$ regime), and an arbitrary equipartition is in turn never worse than not grouping at all (deterministically). A non-uniform singular-value extension is in \cref{app:nonuniform_proof}.

\subsection{Computational Complexity}
\label{sec:complexity}

Let $d = \max(\dout, \din)$. The GRAM pipeline incurs, once per model:
\begin{itemize}[nosep,leftmargin=*]
  \item \emph{Per-task SVD:} For LoRA deltas $\dW_i = B_i A_i$ (Flan-T5/GLUE, Qwen medical), thin QR of $B_i$ and $A_i^\top$ followed by an $r\times r$ SVD costs $O((\dout+\din) r^2 + r^3)$ per task per layer. For full-rank deltas (vision), a top-$r$ truncated SVD costs $O(\dout \din r)$ per task per layer.
  \item \emph{Pairwise chordal distance:} $d_c^2(U_i,U_j) = r - \fnorm{U_i^\top U_j}^2$ is $O(\dout r^2)$ per pair; the full $\nT\times\nT$ matrix is $O(\nT^2 \dout r^2)$ --- no need to materialize the projection matrices $P_i$.
  \item \emph{Clustering:} $k$-means++ on the precomputed pairwise-distance matrix; negligible when $\nT$ is small.
  \item \emph{Per-group merge:} $O(\nK \nL \dout r^2)$ for the polar-factor orthogonalization plus $O(|G_k| r \log r)$ per layer for rank allocation.
\end{itemize}
The dominant term is $O(\nT\nL(\dout+\din)r^2)$ for LoRA deltas and $O(\nT\nL\dout\din r)$ for full-rank deltas, both strictly cheaper than one epoch of fine-tuning ($O(n_{\text{train}} \nL d^2)$ per task). At inference, each group's rank-$r$ merged update costs $O(dr)$ per layer---identical to individual or global merge on a per-query basis. GRAM stores $\nK$ merged updates instead of $\nT$ per-task models, a factor $\nT/\nK$ reduction in storage with no additional per-query FLOPs.

\subsection{Generalization via Rademacher Complexity}
\label{sec:rademacher}

\begin{definition}[Linear-update hypothesis class]
\label{def:hyp_class}
Fix $W_0 \in \R^{\dout \times \din}$, $a \in \R^{\dout}$ with $\|a\|_2 = 1$, and $\beta > 0$. Let
$\mathcal{H}_a(\beta) = \bigl\{x \mapsto a^\top (W_0 + \dW)\,x : \fnorm{\dW} \le \beta \bigr\}.$
\end{definition}

\begin{theorem}[Rademacher bound for linear updates]
\label{thm:rademacher}
Let $\{x_1, \dots, x_n\}$ be an i.i.d.\ sample with $\|x_i\|_2 \le R$. Then
\begin{equation}\label{eq:rademacher}
  \hat{\mathfrak{R}}_n\!\bigl(\mathcal{H}_a(\beta)\bigr) \;\le\; \beta R/\sqrt{n}.
\end{equation}
\end{theorem}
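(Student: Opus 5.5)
The plan is to reduce the bound to the standard empirical Rademacher bound for bounded-norm linear predictors. The one structural observation needed is that the fixed backbone $W_0$ contributes a term that vanishes in expectation.

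\textbf{Step 1: rewrite the hypothesis class.} For $h\in\mathcal{H}_a(\beta)$ write $h(x)=a^\top W_0 x + a^\top \dW x$. Since $a^\top \dW x = \langle a x^\top, \dW\rangle_{\mathrm F}$, the class is a fixed offset plus linear functionals indexed by $\dW$ in a Frobenius ball of radius $\beta$. Substituting into the definition gives
\[
\hat{\mathfrak{R}}_n = \E_\varepsilon \sup_{\fnorm{\dW}\le\beta}\frac1n\sum_{i=1}^n \varepsilon_i\bigl(a^\top W_0x_i + \langle a x_i^\top,\dW\rangle_{\mathrm F}\bigr).
\]

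\textbf{Step 2: remove the offset.} The term $\frac1n\sum_i\varepsilon_i a^\top W_0 x_i$ does not depend on $\dW$, so it comes out of the supremum. Because the Rademacher variables $\varepsilon_i$ have mean zero, this term has zero expectation and drops out.

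\textbf{Step 3: evaluate the supremum.} By Cauchy--Schwarz in the Frobenius inner product,
\[
\sup_{\fnorm{\dW}\le\beta}\Bigl\langle a\Bigl(\textstyle\sum_i\varepsilon_i x_i\Bigr)^{\!\top},\dW\Bigr\rangle_{\mathrm F}=\beta\,\Bigl\|a\Bigl(\textstyle\sum_i\varepsilon_i x_i\Bigr)^{\!\top}\Bigr\|_{\mathrm F}.
\]
For a rank-one matrix, $\fnorm{a v^\top}=\|a\|_2\|v\|_2$, and here $\|a\|_2=1$. So the quantity reduces to $\frac{\beta}{n}\,\E\bigl\|\sum_i\varepsilon_i x_i\bigr\|_2$.

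\textbf{Step 4: bound the Rademacher sum.} Apply Jensen's inequality: $\E\|\sum_i\varepsilon_i x_i\|_2\le\sqrt{\E\|\sum_i\varepsilon_i x_i\|_2^2}$. The $\varepsilon_i$ are independent with mean zero, so the cross terms vanish and $\E\|\sum_i\varepsilon_i x_i\|_2^2=\sum_i\|x_i\|_2^2\le nR^2$. Combining, $\hat{\mathfrak R}_n\le \beta\sqrt{n}R/n=\beta R/\sqrt n$, which is \eqref{eq:rademacher}.

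No step is a real obstacle. The only points requiring care are the offset removal in Step 2, which relies on $W_0$ being fixed and the $\varepsilon_i$ being symmetric, and the rank-one Frobenius identity in Step 3. The i.i.d.\ assumption is not needed, because the bound is conditional on the sample.
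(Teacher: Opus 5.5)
Your proposal is correct and follows the paper's proof essentially step for step: you split off the fixed $W_0$ term and remove it via $\E[\varepsilon_i]=0$, rewrite as a Frobenius inner product, evaluate the supremum by Cauchy--Schwarz with the rank-one identity $\fnorm{av^\top}=\|a\|_2\|v\|_2$, and finish with Jensen plus vanishing cross terms. Your side remark that the i.i.d.\ assumption is not needed for this empirical, sample-conditional quantity is also correct.
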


\begin{corollary}[Per-group capacity is no larger than the global class]
\label{cor:grouping_complexity}
If the GRAM per-group merged updates $\{\dWt^{(k)}\}_{k=1}^{\nK}$ all satisfy $\fnorm{\dWt^{(k)}}\leq\beta$, each per-group update belongs to $\mathcal{H}_a(\beta)$, so the per-group Rademacher complexity is bounded by $\beta R/\sqrt n$, matching the global class.
\end{corollary}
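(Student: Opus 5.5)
The statement to prove is Corollary~\ref{cor:grouping_complexity}. The plan is to reduce it to \cref{thm:rademacher} by a direct membership argument. There are two steps.

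First, fix the reference point $W_0$ and the unit vector $a$ from \cref{def:hyp_class}. For each group $k$, consider the deployed predictor $x \mapsto a^\top (W_0 + \dWt^{(k)})x$, which uses the deployed model of \eqref{eq:deployed_model}. The hypothesis $\fnorm{\dWt^{(k)}}\le\beta$ is exactly the membership condition in \cref{def:hyp_class}, so this predictor lies in $\mathcal{H}_a(\beta)$. Nothing about how the merge was formed is needed: the polar factors, the adaptive allocation, and the partition enter only through the norm bound, which is assumed.

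Second, let $\mathcal{F}_k$ be any per-group class of such predictors, for example the set of updates GRAM could output for group $G_k$ under the norm constraint. Then $\mathcal{F}_k\subseteq\mathcal{H}_a(\beta)$. Empirical Rademacher complexity is monotone under inclusion, because the supremum over a subset is at most the supremum over the superset, pointwise in the Rademacher signs. Hence $\hat{\mathfrak{R}}_n(\mathcal{F}_k)\le \hat{\mathfrak{R}}_n(\mathcal{H}_a(\beta))\le \beta R/\sqrt n$ by \cref{thm:rademacher}. This holds for each group's sample, with $n$ taken as the sample size used for that group.

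The global merge $\dWt$ ($\nK=1$) satisfies the same constraint, so it receives the identical bound. This gives ``matching the global class.''

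I expect no real obstacle. The only point needing care is interpretation. The corollary compares bounds, not exact complexities, and it requires the same $\beta$ for every group. If one instead wanted the bound to follow from the task vectors' norms, one would first have to show $\fnorm{\dWt^{(k)}}\le\beta$. A first attempt would go through the stacked singular values: $\fnorm{\dWt^{(k)}}^2=\|\bar s\|_2^2\le\sum_{i\in G_k}\fnorm{\dW_i}^2$, since $\widehat U,\widehat V$ have orthonormal columns and $\bar s$ collects a subset of the members' singular values. This step is not needed, however, because the corollary assumes the norm bound.
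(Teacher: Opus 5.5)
Your proposal is correct and follows the paper's own argument, which is essentially stated inside the corollary: the assumed bound $\fnorm{\dWt^{(k)}}\le\beta$ places each deployed per-group predictor in $\mathcal{H}_a(\beta)$, and \cref{thm:rademacher} then gives $\beta R/\sqrt{n}$; your inclusion-monotonicity step just makes explicit what the paper leaves implicit. One small correction: ``matching the global class'' refers to the bound for the class $\mathcal{H}_a(\beta)$ itself, so drop your assertion that the $\nK=1$ global merge satisfies the same norm constraint, since that does not follow from the per-group hypothesis and is not needed.
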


\section{Proof Details}
\label{app:proofs}

\subsection{Proof of \cref{thm:grouping}}
\label{app:grouping_proof}

\begin{proof}[Proof of \cref{thm:grouping}]
For the global merge, let $B^* \in \R^{\dout \times r}$ be the optimal rank-$r$ basis minimizing $\sum_{i=1}^{\nT} \fnorm{(I - B^* B^{*\top})\dW_i}^2$ over all tasks.

For a partition $\{G_1, \ldots, G_\nK\}$, let $B^{(k)}$ be the optimal rank-$r$ basis for group $G_k$, i.e.,
\begin{equation}
    B^{(k)} = \argmin_{B \in \R^{\dout \times r},\, B^\top B = I}
    \sum_{i \in G_k} \fnorm{(I - BB^\top)\dW_i}^2.
\end{equation}

By optimality of $B^{(k)}$ within group $G_k$, for any orthonormal $B \in \R^{\dout \times r}$:
\begin{equation}
    \sum_{i \in G_k} \fnorm{(I - B^{(k)} B^{(k)\top})\dW_i}^2
    \leq \sum_{i \in G_k} \fnorm{(I - BB^\top)\dW_i}^2.
\end{equation}

In particular, choosing $B = B^*$ (the global optimum) for each group:
\begin{equation}
    \sum_{k=1}^{\nK} \sum_{i \in G_k} \fnorm{(I - B^{(k)} B^{(k)\top})\dW_i}^2
    \leq \sum_{k=1}^{\nK} \sum_{i \in G_k} \fnorm{(I - B^* B^{*\top})\dW_i}^2
    = \sum_{i=1}^{\nT} \fnorm{(I - B^* B^{*\top})\dW_i}^2,
\end{equation}
which establishes that group-wise merging with per-group optimal bases always achieves error no greater than global merging.

Strictness of the inequality requires a uniqueness/non-degeneracy assumption: if some $B^{(k)}$ strictly improves on $B^*$ within group $G_k$ (which generically holds when the within-group leading singular subspace differs from the global one), the inequality is strict; otherwise it can be tight.
\end{proof}

\subsection{Proof of \cref{thm:gram_basis}}
\label{app:gram_basis_proof}

We restate the theorem for convenience and then prove it via three elementary lemmas.

\begin{lemma}[Pythagorean identity for orthogonal projectors]
\label{lem:pythag}
For any orthogonal projector $P\in\R^{\dout\times\dout}$ ($P^2=P=P^\top$) and any matrix $A\in\R^{\dout\times\din}$,
$\fnorm{(I-P)A}^2 = \fnorm{A}^2 - \fnorm{PA}^2$.
\end{lemma}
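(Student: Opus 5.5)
The identity follows from an orthogonal decomposition of $A$ followed by the Pythagorean theorem in the Frobenius inner product $\langle X,Y\rangle=\tr(X^\top Y)$; this is the same argument used column-wise for \eqref{eq:unweighted_geometry}, with $BB^\top$ replaced by a general orthogonal projector $P$.

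\textbf{Step 1: decompose $A$.} Write
\[
A = PA + (I-P)A .
\]
The map $I-P$ is itself an orthogonal projector, since $(I-P)^2 = I-2P+P^2 = I-P$ and $(I-P)^\top = I-P$.

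\textbf{Step 2: the two pieces are orthogonal.} Compute the cross term using $P^\top = P$ and $P^2 = P$:
\[
\langle PA,(I-P)A\rangle = \tr\bigl(A^\top P^\top (I-P) A\bigr) = \tr\bigl(A^\top (P-P^2) A\bigr) = 0 .
\]
This is the only step that uses the hypotheses, and it needs both of them: symmetry of $P$ to move it across the inner product, and idempotence to make $P-P^2$ vanish. For an oblique (non-symmetric) projector the cross term does not vanish.

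\textbf{Step 3: expand the norm.} Expanding the squared norm of the sum,
\[
\fnorm{A}^2 = \fnorm{PA}^2 + \fnorm{(I-P)A}^2 + 2\langle PA,(I-P)A\rangle .
\]
By Step 2 the last term is zero, and rearranging gives $\fnorm{(I-P)A}^2 = \fnorm{A}^2 - \fnorm{PA}^2$, as claimed.

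There is no real obstacle here. The only point needing care is Step 2, where both the symmetry and the idempotence of $P$ are used.
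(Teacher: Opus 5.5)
Your proof is correct and is essentially the paper's argument: decompose $A = PA + (I-P)A$, expand $\fnorm{A}^2$ in the Frobenius inner product, and kill the cross term via $\tr\bigl(A^\top P^\top (I-P)A\bigr) = \tr\bigl(A^\top (P-P^2)A\bigr) = 0$ using symmetry and idempotence. Neither of your side remarks affects the proof: the observation that $I-P$ is itself a projector is not needed, and for an oblique projector the cross term merely \emph{need not} vanish, rather than never vanishing.
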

\begin{proof}
$\fnorm{A}^2 = \fnorm{PA + (I-P)A}^2 = \fnorm{PA}^2 + 2\langle PA,(I-P)A\rangle_F + \fnorm{(I-P)A}^2$. The cross term vanishes because $\langle PA,(I-P)A\rangle_F = \tr(A^\top P^\top(I-P)A) = \tr(A^\top P(I-P)A) = 0$, using $P=P^\top$ and $P(I-P)=0$.
\end{proof}

\begin{lemma}[Subspace monotonicity of projected energy]
\label{lem:subspace_mono}
Let $P,Q\in\R^{\dout\times\dout}$ be orthogonal projectors with $\mathrm{col}(Q)\subseteq\mathrm{col}(P)$. Then $P\succeq Q$ in the Loewner order, and for every $A\in\R^{\dout\times\din}$,
$\fnorm{PA}^2 \ge \fnorm{QA}^2$.
\end{lemma}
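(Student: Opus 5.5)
The plan is to prove the Loewner inequality $P\succeq Q$ first and then obtain the energy inequality from it by a trace computation. Alternatively, both can be derived at once from the Pythagorean identity of \cref{lem:pythag}.

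\textbf{Step 1: $PQ = Q$.} Since $\mathrm{col}(Q)\subseteq\mathrm{col}(P)$ and $P$ acts as the identity on its column space, every column of $Q$ is fixed by $P$, so $PQ=Q$. Taking transposes and using $P=P^\top$, $Q=Q^\top$ gives $QP=Q$.

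\textbf{Step 2: $P-Q$ is an orthogonal projector.} Expanding with Step 1,
\[
(P-Q)^2 = P - PQ - QP + Q = P - Q - Q + Q = P-Q,
\]
and $P-Q$ is symmetric. An orthogonal projector is positive semidefinite, because $x^\top(P-Q)x = \|(P-Q)x\|_2^2\ge 0$. Hence $P\succeq Q$.

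\textbf{Step 3: the energy inequality.} For any $A$, write $\fnorm{PA}^2 = \tr(A^\top P^\top P A) = \tr(A^\top P A)$, and likewise $\fnorm{QA}^2 = \tr(A^\top Q A)$. Then
\[
\fnorm{PA}^2 - \fnorm{QA}^2 = \tr\bigl(A^\top (P-Q) A\bigr) = \fnorm{(P-Q)A}^2 \ge 0 .
\]
Equivalently, $\tr(A^\top(P-Q)A)=\sum_c a_c^\top(P-Q)a_c\ge 0$ column by column, which is the Loewner statement applied directly.

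All the steps are routine. The only point that needs care is Step 1: it must be justified that $PQ=Q$ follows from the column-space inclusion, since $P$ fixes vectors in its range. Everything after that is algebra.
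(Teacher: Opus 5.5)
Your proof is correct and follows essentially the same route as the paper: you derive $PQ=QP=Q$ from the range inclusion, show $P-Q$ is a symmetric idempotent (hence positive semidefinite, giving $P\succeq Q$), and pass to the energy inequality through the trace $\tr\bigl(A^\top(P-Q)A\bigr)$. Your Step~3 is marginally sharper than the paper's, since it records the exact identity $\fnorm{PA}^2-\fnorm{QA}^2=\fnorm{(P-Q)A}^2$ rather than only the inequality.
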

\begin{proof}
For orthogonal projectors, $\mathrm{col}(Q)\subseteq\mathrm{col}(P)$ is equivalent to $PQ=Q$ (since $Q$ projects into a subspace of $\mathrm{col}(P)$ and $P$ acts as the identity there). Taking transposes and using $P=P^\top$, $Q=Q^\top$ also gives $QP=Q$, so $P-Q$ is symmetric and idempotent: $(P-Q)^2 = P^2 - PQ - QP + Q^2 = P - Q - Q + Q = P-Q$. Hence $P-Q$ is itself an orthogonal projector, and for any $x$,
$x^\top(P-Q)x = \|(P-Q)x\|_2^2 = \|(I-Q)Px\|_2^2 \ge 0$,
where the last equality uses $(P-Q)x = Px - Qx = Px - QPx = (I-Q)Px$. Thus $P\succeq Q$, and applying this with $x = A v$ gives $A^\top(P-Q)A \succeq 0$, so $\tr(A^\top P A) \ge \tr(A^\top Q A)$, i.e.\ $\fnorm{PA}^2 \ge \fnorm{QA}^2$.
\end{proof}

\begin{lemma}[Self-projection recovers the truncation energy]
\label{lem:self_proj}
For any task $i$ with thin SVD $\dW_i = U_i\Sigma_i V_i^\top$ and any $0\le k_i\le r$, letting $P_i := U_i^{(k_i)}U_i^{(k_i)\top}$ (with $P_i = 0$ when $k_i=0$),
$\fnorm{P_i\,\dW_i}^2 = \sum_{\ell=1}^{k_i}\sigma_{i,\ell}^2$, where the sum is empty (and equals $0$) when $k_i=0$.
\end{lemma}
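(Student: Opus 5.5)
The plan is to write $P_i\,\dW_i$ out explicitly using the thin SVD and orthonormality, then take Frobenius norms. The case $k_i=0$ is immediate: $P_i=0$, so $P_i\,\dW_i=0$ and both sides vanish.

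For $1\le k_i\le r$, partition the thin SVD as
\[
U_i=\bigl[\,U_i^{(k_i)}\;\big|\;U_i^{(>k_i)}\,\bigr],\qquad \Sigma_i=\diag\bigl(\Sigma_i^{(k_i)},\Sigma_i^{(>k_i)}\bigr),\qquad V_i=\bigl[\,V_i^{(k_i)}\;\big|\;V_i^{(>k_i)}\,\bigr].
\]
Since $U_i$ has orthonormal columns, we have $U_i^{(k_i)\top}U_i^{(k_i)}=I_{k_i}$ and $U_i^{(k_i)\top}U_i^{(>k_i)}=0$. It follows that
\[
U_i^{(k_i)\top}\dW_i=\bigl[\,I_{k_i}\;\;0\,\bigr]\Sigma_iV_i^\top=\Sigma_i^{(k_i)}V_i^{(k_i)\top},
\]
and therefore
\[
P_i\,\dW_i=U_i^{(k_i)}\Sigma_i^{(k_i)}V_i^{(k_i)\top}.
\]
This is just the rank-$k_i$ truncation of $\dW_i$.

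Next we take Frobenius norms and use unitary invariance for matrices with orthonormal columns. For any $M$, $\fnorm{U_i^{(k_i)}M}^2=\tr(M^\top U_i^{(k_i)\top}U_i^{(k_i)}M)=\fnorm{M}^2$, and the same holds for right multiplication by $V_i^{(k_i)\top}$, because $V_i^{(k_i)\top}V_i^{(k_i)}=I_{k_i}$. Applying both gives
\[
\fnorm{P_i\,\dW_i}^2=\fnorm{\Sigma_i^{(k_i)}}^2=\sum_{\ell=1}^{k_i}\sigma_{i,\ell}^2.
\]
A slightly shorter route uses $\fnorm{P_iA}^2=\tr(A^\top P_iA)$, since $P_i$ is symmetric and idempotent. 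This gives
\[
\fnorm{P_i\,\dW_i}^2=\tr\bigl(V_i\Sigma_iU_i^\top P_iU_i\Sigma_iV_i^\top\bigr)=\tr\bigl(\Sigma_i\,\diag(I_{k_i},0)\,\Sigma_i\bigr),
\]
where the trace is cyclic and $V_i^\top V_i=I_r$.

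There is no real obstacle. The only point needing care is the block bookkeeping $U_i^\top U_i^{(k_i)}=\bigl[\,I_{k_i}\;\;0\,\bigr]^\top$, which holds because the columns of $U_i$ are orthonormal. If $\dW_i$ has rank below $r$, the $\sigma_{i,\ell}$ that are zero contribute nothing, so the identity still holds.
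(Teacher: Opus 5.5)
Your proof is correct and follows essentially the same route as the paper: both use the block structure $U_i^{(k_i)\top}U_i = [\,I_{k_i}\mid 0\,]$ to get $U_i^{(k_i)\top}\dW_i = \Sigma_i^{(k_i)}V_i^{(k_i)\top}$, and then invoke the invariance of the Frobenius norm under matrices with orthonormal columns. The only cosmetic difference is that you first write $P_i\dW_i = U_i^{(k_i)}\Sigma_i^{(k_i)}V_i^{(k_i)\top}$ and then remove $U_i^{(k_i)}$ on the left, whereas the paper uses $\fnorm{U_i^{(k_i)}U_i^{(k_i)\top}\dW_i} = \fnorm{U_i^{(k_i)\top}\dW_i}$ directly.
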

\begin{proof}
The case $k_i=0$ is immediate from $P_i=0$. For $k_i\ge 1$, $U_i^{(k_i)\top}U_i = [\,I_{k_i}\,|\,0_{k_i\times(r-k_i)}\,]$, so
$U_i^{(k_i)\top}\dW_i = [\,I_{k_i}\,|\,0\,]\Sigma_i V_i^\top = [\,\Sigma_i^{(k_i)}\,|\,0\,]V_i^\top = \Sigma_i^{(k_i)}V_i^{(k_i)\top}$,
where $\Sigma_i^{(k_i)}=\diag(\sigma_{i,1},\ldots,\sigma_{i,k_i})$ and $V_i^{(k_i)}\in\R^{\din\times k_i}$. Since $U_i^{(k_i)}$ has orthonormal columns, $\fnorm{P_i\dW_i}^2 = \fnorm{U_i^{(k_i)\top}\dW_i}^2 = \fnorm{\Sigma_i^{(k_i)}V_i^{(k_i)\top}}^2 = \fnorm{\Sigma_i^{(k_i)}}^2 = \sum_{\ell=1}^{k_i}\sigma_{i,\ell}^2$ (using $\fnorm{XY^\top}=\fnorm{X}$ when $Y$ has orthonormal columns).
\end{proof}

\begin{proof}[Proof of \cref{thm:gram_basis}]
Let $P^{(k)} := \widehat U^{(k)}\widehat U^{(k)\top}$ be the orthogonal projector onto $\mathrm{col}(\widehat U^{(k)})$. By the construction of $\widehat U^{(k)}$, $\mathrm{col}(\widehat U^{(k)}) = \mathrm{col}(M^{(k)})$. (This holds in particular for the orthogonal polar factor of $M^{(k)}$ used in GRAM: if $M^{(k)} = u\,s\,v^\top$ is the SVD, then $u v^\top$ has the same column space as $u$, which equals $\mathrm{col}(M^{(k)})$.)

Since $U_i^{(k_i)}$ is one of the column blocks of $M^{(k)}$ by construction, every column of $U_i^{(k_i)}$ lies in $\mathrm{col}(M^{(k)}) = \mathrm{col}(\widehat U^{(k)})$, so
\begin{equation}
  \mathrm{col}(P_i) \;=\; \mathrm{col}(U_i^{(k_i)}) \;\subseteq\; \mathrm{col}(\widehat U^{(k)}) \;=\; \mathrm{col}(P^{(k)}).
\end{equation}
By \cref{lem:subspace_mono}, $\fnorm{P^{(k)}\dW_i}^2 \ge \fnorm{P_i\dW_i}^2$, and by \cref{lem:self_proj}, $\fnorm{P_i\dW_i}^2 = \sum_{\ell=1}^{k_i}\sigma_{i,\ell}^2$. Combining,
\begin{equation}
  \fnorm{P^{(k)}\dW_i}^2 \;\ge\; \sum_{\ell=1}^{k_i}\sigma_{i,\ell}^2.
\end{equation}
Applying \cref{lem:pythag} to $P=P^{(k)}$ and $A=\dW_i$ gives $\fnorm{(I-P^{(k)})\dW_i}^2 = \fnorm{\dW_i}^2 - \fnorm{P^{(k)}\dW_i}^2$. Substituting the lower bound and using $\fnorm{\dW_i}^2 = \sum_{\ell=1}^r\sigma_{i,\ell}^2$,
\begin{equation}
  \fnorm{(I-P^{(k)})\dW_i}^2 \;\le\; \sum_{\ell=1}^{r}\sigma_{i,\ell}^2 - \sum_{\ell=1}^{k_i}\sigma_{i,\ell}^2 \;=\; \sum_{\ell=k_i+1}^{r}\sigma_{i,\ell}^2,
\end{equation}
which is~\eqref{eq:gram_basis_bound}.

\textbf{Equality case ($m=1$).} If the coalition has a single task, $M^{(k)} = U_i^{(k_i)}$ and $\mathrm{col}(P^{(k)}) = \mathrm{col}(P_i)$, so the inequality in \cref{lem:subspace_mono} is an equality, and the bound is attained.
\end{proof}

\paragraph{What this proof does and does not give.}
The proof bounds the projection error onto the GRAM column basis, not the reconstruction error
$\|\dW_i - \dWt^{(k)}\|_F^2$ of the merged delta itself. The latter decomposes orthogonally as
\begin{equation}
\label{eq:gram_orth_decomp}
  \|\dW_i - \dWt^{(k)}\|_F^2 \;=\; \|(I-P^{(k)})\dW_i\|_F^2 + \|P^{(k)}\dW_i - \dWt^{(k)}\|_F^2,
\end{equation}
and \cref{thm:gram_basis} controls only the first term. The second term --- the in-subspace fitting error of the single merged delta against task $i$'s in-subspace component --- is not analyzed here and would require a separate argument tracking how $\dWt^{(k)} = \widehat U^{(k)}\,\diag(\bar{s})\,\widehat V^{(k)\top}$ is constructed from the per-task singular values and right-singular vectors. The bound in \cref{thm:gram_basis} is also loose precisely when other tasks in the coalition contribute new directions that align with $\mathrm{col}(U_i)\setminus\mathrm{col}(U_i^{(k_i)})$ --- the regime in which grouping is most useful --- because the proof discards all inter-task overlap by lower-bounding $\fnorm{P^{(k)}\dW_i}^2$ via the smaller projector $P_i$ alone.

\subsection{GRAM vs.\ global merge under uniform allocation (statement and proof)}
\label{app:gram_vs_global_proof}

\begin{corollary}[Grouping reduces the projection-error upper bound for every task]
\label{cor:gram_vs_global}
Fix a LoRA rank budget $\rho$ per merged adapter. Compare the global merge over all $\nT$ tasks under uniform allocation $k_i^{\mathrm{glob}} := \lfloor\rho/\nT\rfloor$ to a GRAM coalition $G_k$ of size $m < \nT$ under uniform allocation $k_i := \lfloor\rho/m\rfloor$. Then $k_i^{\mathrm{glob}} \le k_i$, and for every task $i \in G_k$ the projection-error upper bounds from \cref{thm:gram_basis} satisfy
\begin{equation}
\label{eq:gram_vs_global}
\underbrace{\sum_{\ell=k_i+1}^{r}\sigma_{i,\ell}^2}_{\text{GRAM bound}}
\;\le\;
\underbrace{\sum_{\ell=k_i^{\mathrm{glob}}+1}^{r}\sigma_{i,\ell}^2}_{\text{global bound}},
\qquad
\text{improvement} \;\;\ge\;\; \sum_{\ell=k_i^{\mathrm{glob}}+1}^{k_i}\sigma_{i,\ell}^2 \;\ge\; 0.
\end{equation}
The improvement is strictly positive whenever any of the singular values $\sigma_{i,k_i^{\mathrm{glob}}+1},\ldots,\sigma_{i,k_i}$ is non-zero.
\end{corollary}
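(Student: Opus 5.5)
The plan has three parts: compare the two uniform allocations, apply \cref{thm:gram_basis} to each configuration, and then compare the two resulting tail sums term by term.

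\textbf{Step 1: the allocations are ordered.} Since $1\le m<\nT$, we have $\rho/m > \rho/\nT$. The floor function is monotone nondecreasing, so $k_i^{\mathrm{glob}}=\lfloor\rho/\nT\rfloor \le \lfloor\rho/m\rfloor = k_i$.

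For \cref{thm:gram_basis} to apply, the allocations must also satisfy $0\le k_i\le r$ and $\sum_i k_i\le \rho$. The budget condition holds because $m\lfloor\rho/m\rfloor\le\rho$ and $\nT\lfloor\rho/\nT\rfloor\le\rho$. For the cap, I take $\rho\le r$, since $\rho$ plays the role of the per-merge rank budget $r$ of \eqref{eq:merge_obj}. If $\rho$ were allowed to exceed $r$, I would replace each allocation by its minimum with $r$. The tail sum $\sum_{\ell>k}^{r}\sigma_{i,\ell}^2$ is unchanged by this truncation, and the truncated allocations are still ordered.

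\textbf{Step 2: apply \cref{thm:gram_basis} twice.} First apply it to coalition $G_k$ with the uniform allocation $(k_1,\dots,k_m)$. Then apply it to the single global coalition $\{1,\dots,\nT\}$ with allocation $k^{\mathrm{glob}}$. For each task $i\in G_k$, this gives the two upper bounds labelled in \eqref{eq:gram_vs_global}. When $k_i^{\mathrm{glob}}=0$, the theorem's convention makes the global bound read $\fnorm{\dW_i}^2$, which is consistent with the tail-sum formula.

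\textbf{Step 3: compare the tail sums.} Subtracting the two bounds gives the telescoping identity
\[
\sum_{\ell=k_i^{\mathrm{glob}}+1}^{r}\sigma_{i,\ell}^2-\sum_{\ell=k_i+1}^{r}\sigma_{i,\ell}^2=\sum_{\ell=k_i^{\mathrm{glob}}+1}^{k_i}\sigma_{i,\ell}^2 .
\]
This holds because $k_i^{\mathrm{glob}}\le k_i\le r$. The right-hand side is a sum of squares, so it is nonnegative, which proves the inequality in \eqref{eq:gram_vs_global}.

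The claimed improvement is exactly this difference, and it is $\ge 0$. It is strictly positive exactly when one of $\sigma_{i,k_i^{\mathrm{glob}}+1},\dots,\sigma_{i,k_i}$ is nonzero, which is the strictness clause of the statement.

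\textbf{Main obstacle.} There is no real analytic difficulty here. The step needing care is the bookkeeping: confirming that each uniform allocation satisfies the hypotheses of \cref{thm:gram_basis}, in particular the cap $k_i\le r$ and the handling of $k=0$. I would also state explicitly that the corollary compares the upper bounds, not the true projection errors, consistent with the remark after \cref{thm:gram_basis}.
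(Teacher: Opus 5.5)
Your proposal is correct and follows the paper's proof. Floor monotonicity gives $k_i^{\mathrm{glob}}\le k_i$, you apply \cref{thm:gram_basis} to both the coalition and the global configuration, and the two tail sums differ by exactly $\sum_{\ell=k_i^{\mathrm{glob}}+1}^{k_i}\sigma_{i,\ell}^2\ge 0$; your hypothesis check under $\rho\le r$ is bookkeeping the paper leaves implicit. One small caveat on your fallback for $\rho>r$: capping each $k_i$ at $r$ restores $k_i\le r$ but not the budget condition $\sum_{i\in G_k}k_i\le r$ of \cref{thm:gram_basis}, so that case would instead need the theorem's argument run with a $\rho$-column orthonormal basis containing $\mathrm{col}(M^{(k)})$, which its proof supports unchanged.
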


\begin{proof}
$\nT > m$ implies $\lfloor\rho/\nT\rfloor \le \lfloor\rho/m\rfloor$, hence $k_i^{\mathrm{glob}} \le k_i$. The tail sum $\sum_{\ell=k+1}^{r}\sigma_{i,\ell}^2$ is non-increasing in $k$, so the GRAM bound is no larger than the global bound; the difference is exactly $\sum_{\ell=k_i^{\mathrm{glob}}+1}^{k_i}\sigma_{i,\ell}^2$. Applying \cref{thm:gram_basis} to both bases yields~\eqref{eq:gram_vs_global}.
\end{proof}

\begin{remark}[Tightness]
\Cref{cor:gram_vs_global} states a worst-case guarantee on the gap between GRAM and global merging at the level of upper bounds, not a comparison of actual projection errors; the looseness of \cref{thm:gram_basis} discussed above carries through here. The multiplicative gap between alignment-aware and random grouping is captured by \cref{thm:expected_gap}, not this corollary.
\end{remark}

\subsection{Equivalence of chordal and Euclidean distances on projectors}
\label{app:chordal_eucl_proof}

To run $k$-means++ off the shelf we need a Euclidean embedding of the Grassmannian. Each subspace $\mathrm{col}(U_i^{(\ell)})$ has a unique \emph{orthogonal projector} $P_i^{(\ell)} := U_i^{(\ell)}U_i^{(\ell)\top} \in \R^{\dout\times\dout}$, which is invariant to the choice of orthonormal basis for the subspace and serves as that embedding. On these projectors,
\begin{equation}
\label{eq:chordal_eucl}
\fnorm{P_i^{(\ell)} - P_j^{(\ell)}}^2 \;=\; \fnorm{P_i^{(\ell)}}^2 + \fnorm{P_j^{(\ell)}}^2 - 2\,\langle P_i^{(\ell)}, P_j^{(\ell)}\rangle \;=\; 2\,d_c^2\!\bigl(U_i^{(\ell)},\, U_j^{(\ell)}\bigr),
\end{equation}
using $\fnorm{P_i^{(\ell)}}^2 = r$ (rank-$r$ idempotent) and $\langle P_i^{(\ell)}, P_j^{(\ell)}\rangle = \fnorm{U_i^{(\ell)\top}U_j^{(\ell)}}^2$. The chordal distance on the Grassmannian is therefore Euclidean (up to a factor of $2$) on the projector embedding.

\subsection{Chordal Fr\'{e}chet mean and the GRAM merge basis}
\label{app:frechet_proof}

\begin{proposition}[Fr\'{e}chet mean under chordal distance]
\label{prop:frechet}
Let $\{U_1,\ldots,U_m\}\subset\R^{\dout\times r}$ have orthonormal columns and let $M := \sum_{i=1}^{m}U_iU_i^\top$. Any minimizer of $\sum_{i=1}^{m}d_c^2(U_i, C)$ over $C\in\mathrm{Gr}(r,\dout)$ is a top-$r$ invariant subspace of $M$; the minimizer is unique whenever $\lambda_r(M) > \lambda_{r+1}(M)$, and otherwise any choice of $r$-dimensional subspace within the top eigenspace is optimal.
\end{proposition}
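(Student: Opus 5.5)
The plan is to rewrite the chordal objective as a linear functional of the projector $P_C := CC^\top$, where $C\in\R^{\dout\times r}$ is any orthonormal basis of the candidate subspace, and then apply Ky Fan's maximum principle. By \cref{eq:chordal_dist} together with $\fnorm{U_i^\top C}^2 = \tr(C^\top U_iU_i^\top C)$, we have
\[
\sum_{i=1}^{m} d_c^2(U_i,C) \;=\; mr - \sum_{i=1}^{m}\tr\bigl(C^\top U_iU_i^\top C\bigr) \;=\; mr - \tr\bigl(C^\top M C\bigr).
\]
Minimizing the chordal Fr\'echet objective over $\mathrm{Gr}(r,\dout)$ is therefore the same as maximizing $\tr(C^\top M C)$ over orthonormal $C$. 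This quantity depends only on $\mathrm{col}(C)$, since it equals $\langle M, P_C\rangle_F$. Equivalently, the projector embedding of \cref{app:chordal_eucl_proof} turns this into minimizing $\sum_i\fnorm{P_i-P_C}^2$.

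Next I would invoke the Ky Fan trace inequality. $M$ is symmetric positive semidefinite with eigenvalues $\lambda_1\ge\cdots\ge\lambda_{\dout}\ge 0$, so $\max_{C^\top C=I_r}\tr(C^\top MC)=\sum_{j\le r}\lambda_j$, and the maximum is attained by the top-$r$ eigenvectors. This gives existence of a minimizer and the value of the optimum.

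The main step, and the one I expect to need the most care, is characterizing \emph{all} maximizers. My approach is to diagonalize $M=Q\Lambda Q^\top$ and set $W:=Q^\top C$, so that $\tr(C^\top MC)=\sum_j \lambda_j w_j$ with $w_j:=\|W_{j,:}\|_2^2\in[0,1]$ and $\sum_j w_j=r$. This is a linear program over the capped simplex, and its optimum places full weight $w_j=1$ on the $r$ largest $\lambda_j$. Equality with $\sum_{j\le r}\lambda_j$ then forces $w_j=1$ for every $j$ with $\lambda_j>\lambda_r$ and $w_j=0$ for every $j$ with $\lambda_j<\lambda_r$.

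From these constraints, $\mathrm{col}(C)$ contains the eigenspace for eigenvalues exceeding $\lambda_r$ and lies inside the eigenspace for eigenvalues at least $\lambda_r$, so it is $M$-invariant and is a top-$r$ invariant subspace. If $\lambda_r>\lambda_{r+1}$, the first space already has dimension $r$, so $\mathrm{col}(C)$ equals it and the minimizer is unique. Otherwise, every $r$-dimensional subspace sandwiched in this way attains $\sum_{j\le r}\lambda_j$, which is exactly the ``any choice within the top eigenspace'' clause.

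Finally, I would note the consequence used after \cref{eq:err_def}. With $\Sigma_i=\sigma I_r$, $\fnorm{(I-BB^\top)\dW_i}^2=\sigma^2 d_c^2(U_i,B)$ by \cref{eq:principal_angle}. Hence the optimal per-group basis is the chordal Fr\'echet mean, and $\mathrm{Err}(\pi)=\sigma^2\Phi(\pi)$.
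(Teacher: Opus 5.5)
Your proof follows the paper's route exactly: rewrite the objective as $mr-\tr(C^\top MC)$ and apply the Ky Fan maximum principle. Your capped-simplex equality analysis then actually proves the ``any minimizer'' and uniqueness claims, which the paper's proof only asserts by exhibiting the attaining subspace. Your sandwich condition ($\mathrm{col}(C)$ contains the eigenspace for eigenvalues $>\lambda_r$ and lies inside the one for eigenvalues $\ge\lambda_r$) is also the correct precise reading of the statement's final clause.

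One slip to fix: when $\lambda_r>\lambda_{r+1}$, it is the \emph{containing} space (eigenvalues $\ge\lambda_r$) that has dimension exactly $r$. The eigenspace for eigenvalues exceeding $\lambda_r$ has dimension at most $r-1$. So $\mathrm{col}(C)$, being $r$-dimensional and contained in the $r$-dimensional space, must equal it.
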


\begin{proof}
Write $d_c^2(U_i,C) = r - \fnorm{C^\top U_i}^2 = r - \tr(C^\top U_iU_i^\top C)$ for any orthonormal $C\in\R^{\dout\times r}$. Therefore
$$\sum_{i=1}^{m}d_c^2(U_i, C) \;=\; mr - \tr\!\bigl(C^\top M C\bigr).$$
Minimizing over orthonormal $C$ is equivalent to maximizing $\tr(C^\top M C)$ where $M\succeq 0$. By the Ky Fan max-trace theorem, the maximum is $\sum_{j=1}^{r}\lambda_j(M)$ and is achieved when $C$ spans the top-$r$ eigenspace of $M$.
\end{proof}

\paragraph{Relation to the GRAM column-stacked basis.}
\Cref{thm:gram_basis} concerns the column space of $M^{(k)} = [U_1^{(k_1)}\!\mid\!\cdots\!\mid\!U_m^{(k_m)}]$, which has dimension at most $R=\sum_i k_i$ (and is set to $r$ by the GRAM allocation rule of \cref{sec:group_merge}, which exhausts the rank budget). The Fr\'{e}chet-mean basis above is the top-$r$ eigenspace of $\sum_i U_iU_i^\top$, and the two coincide only when $\mathrm{col}(M^{(k)})$ equals that eigenspace; in general they differ. For example, with $m=r=2$, $\Sigma_i=\sigma I_2$, $U_1=U_2=[e_1,e_2]$, and allocation $k_1=k_2=1$, the selected column $e_1$ from each task gives $\mathrm{col}(M^{(k)})=\mathrm{span}(e_1)$ while the Fr\'{e}chet mean is $\mathrm{span}(e_1,e_2)$. We therefore distinguish two regimes in the analysis below:
(i) the per-task projection-error \emph{upper} bound \cref{eq:gram_basis_bound}, which holds for \emph{any} orthonormal basis of $\mathrm{col}(M^{(k)})$ — including the GRAM output — without invoking the Fr\'{e}chet-mean identification; and
(ii) the latent-group analysis ($k$-means++ gap, error chain), in which $\mathrm{Err}(\pi)$ is defined via the optimal-basis projection error and the Fr\'{e}chet mean is invoked as the optimum, not as GRAM's actual output. This separation makes the theory bridge orthogonal to whether GRAM's column-stacked basis exactly matches the Fr\'{e}chet mean.

\subsection{Proof of \cref{lem:alignment_penalty}}
\label{app:alignment_penalty_proof}

\begin{proof}[Proof of \cref{lem:alignment_penalty}]
Let $P:=BB^\top$ and $S := UU^\top + VV^\top$. Using $\fnorm{(I-P)\sigma X}^2 = \sigma^2\bigl(\fnorm{X}^2 - \fnorm{B^\top X}^2\bigr)$ with $\fnorm{U}^2=\fnorm{V}^2=r$ and $\fnorm{B^\top X}^2 = \tr(B^\top XX^\top B)$,
\begin{equation}\label{eq:align_step1}
  \fnorm{(I-P)\sigma U}^2 + \fnorm{(I-P)\sigma V}^2 \;=\; \sigma^2\bigl(2r - \tr(B^\top S\,B)\bigr).
\end{equation}
Since $S = MM^\top$ with $M := [\,U\mid V\,]\in\R^{\dout\times 2r}$, the nonzero eigenvalues of $S$ equal the nonzero eigenvalues of
$$M^\top M \;=\; \begin{pmatrix} I_r & U^\top V \\ V^\top U & I_r \end{pmatrix}.$$
The singular values of $U^\top V$ are $\cos\theta_1,\ldots,\cos\theta_r$ (Golub--Van Loan), so $M^\top M$ is unitarily similar to $\bigoplus_{j=1}^r\bigl(\begin{smallmatrix}1 & \cos\theta_j\\ \cos\theta_j & 1\end{smallmatrix}\bigr)$ and its eigenvalues are $\{1+\cos\theta_j,\,1-\cos\theta_j\}_{j=1}^{r}$. Since $\cos\theta_j\ge 0$, the top-$r$ eigenvalues of $S$ are $\{1+\cos\theta_j\}_{j=1}^r$. By the Ky Fan max-trace theorem,
\begin{equation}\label{eq:align_step2}
  \tr(B^\top S\,B) \;\le\; \sum_{j=1}^{r}(1+\cos\theta_j),
\end{equation}
with equality when $\mathrm{col}(B)$ is a top-$r$ eigenspace of $S$, equivalently the top-$r$ left singular subspace of $[\,U\mid V\,]$. Substituting \eqref{eq:align_step2} into \eqref{eq:align_step1} gives
$$\fnorm{(I-P)\sigma U}^2 + \fnorm{(I-P)\sigma V}^2 \;\ge\; \sigma^2\Bigl(2r - \sum_{j=1}^{r}(1+\cos\theta_j)\Bigr) \;=\; \sigma^2\sum_{j=1}^{r}(1-\cos\theta_j),$$
which is~\eqref{eq:alignment_penalty}.
\end{proof}

\subsection{Proof of \cref{thm:expected_gap}}
\label{app:kmeans_proof}

We prove each part of the theorem under the deterministic latent-group model: $G$ groups, $m$ tasks per group ($\nT=mG$), prototype subspaces $Q_1,\ldots,Q_G$ with pairwise principal angles $\ge\phi_{\min}$, and within-group spread $d_c^2(U_i,Q_{g(i)})\le\epsilon$ for all $i$.

\begin{proof}[Proof of part (1): upper bound]
Define $P_i = U_iU_i^\top$ and recall $\fnorm{P_i-P_j}^2 = 2\,d_c^2(U_i,U_j)$.

\emph{Step~1 (seeding guarantee).}
$k$-means++ selects $\nK$ seeds $s_1,\ldots,s_\nK$ from $\{P_i\}$ via $D^2$-sampling and assigns each $P_i$ to its nearest seed, yielding partition $\pi_{++}$. By~\citet{arthur2007kmeans},
\begin{equation}\label{eq:kmeans_guarantee_app}
\E\!\left[\sum_{k=1}^{\nK}\sum_{i\in G_k^{++}}\fnorm{P_i - s_k}^2\right] \;\le\; 8(\ln\nK+2)\cdot\mathrm{OPT}_E,
\end{equation}
where $\mathrm{OPT}_E := \min_{\pi,\{\mu_k\}}\sum_k\sum_{i\in G_k}\fnorm{P_i-\mu_k}^2$.

\emph{Step~2 (seeds are projections).}
Each seed $s_k = P_{j_k}$ is a rank-$r$ projection matrix, hence a valid point on $\mathrm{Gr}(r,\dout)$. The chordal Fr\'{e}chet mean minimizes cost over rank-$r$ projections and can only reduce cost:
$$\Phi(\pi_{++}) \;\le\; \sum_{k}\sum_{i\in G_k^{++}}d_c^2(U_i, U_{j_k}) \;=\; \tfrac{1}{2}\sum_{k}\sum_{i\in G_k^{++}}\fnorm{P_i - s_k}^2.$$

\emph{Step~3 (Euclidean $\le$ Grassmannian).}
For any partition $\pi$ and group $G_k$, the Euclidean mean $\bar{P}_k$ minimizes $\sum_i\fnorm{P_i-\mu}^2$ over all matrices $\mu$, while the Fr\'{e}chet mean $P_{C_k}$ is constrained to rank-$r$ projections. Therefore $\sum_i\fnorm{P_i-\bar{P}_k}^2 \le \sum_i\fnorm{P_i-P_{C_k}}^2 = 2\sum_i d_c^2(U_i,C_k)$. Minimizing over $C_k$ and summing: $\mathrm{OPT}_E \le 2\,\Phi(\pi^*)$.

\emph{Combining.} $\E[\Phi(\pi_{++})] \le \frac{1}{2}\cdot 8(\ln\nK+2)\cdot 2\,\Phi(\pi^*) = 8(\ln\nK+2)\,\Phi(\pi^*)$. Multiplying by $\sigma^2$ gives $\E[\mathrm{Err}(\pi_{++})] \le 8(\ln\nK+2)\,\mathrm{Err}(\pi^*)$.

\emph{Absolute bound.} The latent partition $\pi_{\mathrm{lat}}$ is feasible:
$\Phi(\pi^*) \le \Phi(\pi_{\mathrm{lat}}) = \sum_{g=1}^G\sum_{i:g(i)=g}\!d_c^2(U_i,C_g^{\mathrm{lat}}) \le \sum_{i=1}^{\nT}d_c^2(U_i,Q_{g(i)}) \le \nT\epsilon,$
where the second inequality uses $d_c^2(U_i,C_g^{\mathrm{lat}}) \le d_c^2(U_i,Q_{g(i)})$ since $C_g^{\mathrm{lat}}$ is the cost-minimizing center.
\end{proof}

\begin{proof}[Proof of part (2): lower bound]
Call $G_k$ \emph{mixed} if it contains tasks $i,j$ with $g(i)\neq g(j)$, and pick any such pair $(i,j)$ in each mixed $G_k$. Let $C_k\in\R^{\dout\times r}$ be the chordal Fr\'{e}chet mean of $G_k$, which is the minimizer in \eqref{eq:err_def} for $\Sigma_l=\sigma I_r$ by \cref{prop:frechet}. \Cref{lem:alignment_penalty} holds for \emph{every} orthonormal $B$, so applying it with $B=C_k$ to the pair $(U_i,U_j)$ and dropping the non-negative contributions from the remaining $|G_k|-2$ tasks,
\begin{equation}\label{eq:per_group_lb}
  \sum_{l\in G_k}\bigl\|(I-C_kC_k^\top)\dW_l\bigr\|_F^2 \;\ge\; \sigma^2\sum_{s=1}^{r}\bigl(1-\cos\theta_s(U_i,U_j)\bigr).
\end{equation}
The left side is exactly the contribution of $G_k$ to $\mathrm{Err}(\pi)$ (since $C_k$ achieves the minimum in \eqref{eq:err_def}), so this is a per-mixed-group lower bound on $\mathrm{Err}(\pi)$.

\emph{Procrustes metric and triangle inequality.}
Define $\delta_p(U,V) := \bigl[2\sum_{s=1}^{r}\bigl(1-\cos\theta_s(U,V)\bigr)\bigr]^{1/2} = \min_{R\in O(r)}\fnorm{U-VR}$, the Procrustes (chordal Fubini--Study) distance, which is a metric on $\mathrm{Gr}(r,\dout)$ \citep{edelman1998geometry}; $\delta_p^2$ denotes its square. We use two facts:
\emph{(a)} since $1-\cos t \le \sin^2 t$ for $t\in[0,\pi/2]$, $\delta_p^2(U,V)\le 2\sum_s\sin^2\theta_s = 2 d_c^2(U,V)$; and
\emph{(b)} since $1-\cos t \ge (1-\cos t)\cdot 1$ trivially with $\theta_s(Q_g,Q_h)\ge\phi_{\min}$, $\delta_p^2(Q_g,Q_h)\ge 2r(1-\cos\phi_{\min})$.
Combining (a) with $d_c^2(U_i,Q_g)\le\epsilon$ gives $\delta_p(U_i,Q_g)\le\sqrt{2\epsilon}$, and likewise for $j$. By the triangle inequality for $\delta_p$,
\begin{equation}\label{eq:proc_triangle}
  \delta_p(U_i,U_j) \;\ge\; \delta_p(Q_g,Q_h) - \delta_p(U_i,Q_g) - \delta_p(U_j,Q_h) \;\ge\; \sqrt{2r(1-\cos\phi_{\min})} - 2\sqrt{2\epsilon}.
\end{equation}
The right side is non-negative iff $\epsilon\le r(1-\cos\phi_{\min})/4$, which is the hypothesis of part~(2). Squaring and dropping the non-negative cross term,
\begin{align*}
  \delta_p^2(U_i,U_j) &\;\ge\; 2r(1-\cos\phi_{\min}) - 2\cdot 2\sqrt{2}\sqrt{r(1-\cos\phi_{\min})}\cdot\sqrt{2\epsilon}\\
  &\;=\; 2r(1-\cos\phi_{\min}) - 8\sqrt{r\epsilon(1-\cos\phi_{\min})}\\
  &\;\ge\; 2r(1-\cos\phi_{\min}) - 8\sqrt{r\epsilon},
\end{align*}
using $1-\cos\phi_{\min}\le 1$. Halving, $\sum_{s=1}^{r}(1-\cos\theta_s(U_i,U_j))\ge r(1-\cos\phi_{\min}) - 4\sqrt{r\epsilon}$. Substituting into \eqref{eq:per_group_lb}, the per-mixed-group contribution to $\mathrm{Err}(\pi_{\mathrm{rand}})$ is at least $\sigma^2[r(1-\cos\phi_{\min}) - 4\sqrt{r\epsilon}]$.

\emph{Expected number of mixed groups.}
Under a uniformly random equipartition into $G$ groups of size $m$, each group is a random $m$-subset; exactly $G$ of the $\binom{\nT}{m}$ such subsets are monochromatic (one per latent group). Hence $\E[N_{\mathrm{mix}}] = G\bigl(1-G/\binom{\nT}{m}\bigr)$. Linearity of expectation across mixed groups gives~\eqref{eq:lower}.
\end{proof}

\begin{proof}[Proof of part (3): global merge lower bound]
Let $B^*\in\R^{\dout\times r}$ be the global merge basis ($\pi_{\mathrm{global}}$ has a single block containing all $\nT=mG$ tasks). We construct $\lfloor \nT/2\rfloor$ pairwise-disjoint inter-group pairs and apply \cref{lem:alignment_penalty} to each.

\emph{Existence of the matching.}
With $G\ge 2$ latent groups of size $m$, no group contains more than $\nT/2$ tasks (since $m \le \nT/2$ for $G\ge 2$). A standard graph-theoretic result then gives a matching of size $\lfloor \nT/2\rfloor$ in the inter-group graph (the complete multipartite graph $K_{m,m,\ldots,m}$): order tasks as $\tau_1,\ldots,\tau_\nT$ by interleaving across groups (one task from each group, then a second from each, etc.), and pair $(\tau_{2k-1},\tau_{2k})$ for $k=1,\ldots,\lfloor \nT/2\rfloor$; consecutive tasks in this ordering belong to different groups whenever $G\ge 2$.

\emph{Per-pair lower bound.}
For each disjoint inter-group pair $(i,j)$ with $g(i)\neq g(j)$, applying \cref{lem:alignment_penalty} with $B=B^*$ and the Procrustes argument from the proof of part~(2),
\begin{equation}\label{eq:global_pair_lb}
  \fnorm{(I-B^*B^{*\top})\dW_i}^2 + \fnorm{(I-B^*B^{*\top})\dW_j}^2 \;\ge\; \sigma^2\bigl[r(1-\cos\phi_{\min}) - 4\sqrt{r\epsilon}\bigr],
\end{equation}
valid for $\epsilon\le r(1-\cos\phi_{\min})/4$.

\emph{Summation.}
Summing \eqref{eq:global_pair_lb} over the $\lfloor \nT/2\rfloor$ disjoint pairs and adding the non-negative contribution of any unmatched task (when $\nT$ is odd),
$$\mathrm{Err}(\pi_{\mathrm{global}}) \;=\; \sum_{i=1}^{\nT}\fnorm{(I-B^*B^{*\top})\dW_i}^2 \;\ge\; \lfloor \nT/2\rfloor\,\sigma^2\bigl[r(1-\cos\phi_{\min}) - 4\sqrt{r\epsilon}\bigr],$$
which is~\eqref{eq:global_lb}.
\end{proof}

\subsection{Proof of \cref{cor:err_chain}}
\label{app:err_chain_proof}

\begin{proof}[Proof of \cref{cor:err_chain}]
\emph{Right inequality (deterministic).}
For any partition $\pi$ with groups $\{G_1,\ldots,G_\nK\}$, let $B^{(k)}_{\pi}$ minimize $\sum_{i\in G_k}\fnorm{(I-BB^\top)\dW_i}^2$ over orthonormal $B$, so $\mathrm{Err}(\pi)=\sum_k\sum_{i\in G_k}\fnorm{(I-B^{(k)}_\pi B^{(k)\top}_\pi)\dW_i}^2$. Let $B^*$ be the analogous global minimizer for $\pi_{\mathrm{global}}$ (a single block of all $\nT$ tasks). Substituting $B=B^*$ as a feasible (sub-optimal) basis inside each group $G_k$,
\begin{align*}
  \mathrm{Err}(\pi) \;=\; \sum_{k=1}^{\nK}\sum_{i\in G_k}\fnorm{(I-B^{(k)}_\pi B^{(k)\top}_\pi)\dW_i}^2
  &\;\le\; \sum_{k=1}^{\nK}\sum_{i\in G_k}\fnorm{(I-B^*B^{*\top})\dW_i}^2\\
  &\;=\; \sum_{i=1}^{\nT}\fnorm{(I-B^*B^{*\top})\dW_i}^2 \;=\; \mathrm{Err}(\pi_{\mathrm{global}}).
\end{align*}
This is exactly \cref{thm:grouping}; it is deterministic and holds in particular for $\pi=\pi_{\mathrm{rand}}$, hence $\E[\mathrm{Err}(\pi_{\mathrm{rand}})]\le\mathrm{Err}(\pi_{\mathrm{global}})$.

\emph{Left inequality (in expectation, in the small-$\epsilon$ regime).}
Let $\alpha := 1-\cos\phi_{\min}\in(0,2]$. From \cref{thm:expected_gap} parts~(1) and~(2),
$$\E[\mathrm{Err}(\pi_{++})] \le 8(\ln\nK+2)\,\sigma^2\nT\epsilon, \qquad \E[\mathrm{Err}(\pi_{\mathrm{rand}})] \ge G\bigl(1-\tfrac{G}{\binom{\nT}{m}}\bigr)\sigma^2[r\alpha - 4\sqrt{r\epsilon}],$$
the latter valid for $\epsilon \le r\alpha/4$. Sufficient for the left inequality is $\E[\mathrm{Err}(\pi_{\mathrm{rand}})] \ge \E[\mathrm{Err}(\pi_{++})]$, i.e.\
$$G\bigl(1-\tfrac{G}{\binom{\nT}{m}}\bigr)[r\alpha - 4\sqrt{r\epsilon}] \;\ge\; 8(\ln\nK+2)\,\nT\epsilon.$$
Using $\nT = mG$ and $1-G/\binom{\nT}{m}\ge 1/2$ for $G\ge 2$, $m\ge 2$, this is implied by $r\alpha - 4\sqrt{r\epsilon} \ge 16(\ln\nK+2)\,m\,\epsilon$. The bracketed term is $\ge r\alpha/2$ whenever $\epsilon\le r\alpha^2/64$, and the right side is $\le r\alpha/2$ whenever $\epsilon\le r\alpha/(32(\ln\nK+2)m)$. Combining these (using $\alpha\le 2$, $\nK\ge 2$, $m\ge 2$, $\ln\nK + 2 \le 4\ln\nK$ for $\nK\ge 2$), both inequalities hold whenever $\epsilon\le c\,r\alpha^2/(m\ln\nK)$ for a sufficiently small absolute constant $c>0$. Hence $\E[\mathrm{Err}(\pi_{++})]\le\E[\mathrm{Err}(\pi_{\mathrm{rand}})]$.
\end{proof}

\subsection{Non-uniform singular values (statement and proof)}
\label{app:nonuniform_proof}

\begin{proposition}[Non-uniform singular values]
\label{prop:nonuniform}
Let $\kappa := \sigma_{\max}/\sigma_{\min}$ with $\sigma_{\max} := \max_{i,k}\sigma_{i,k}$, $\sigma_{\min} := \min_{i,k}\sigma_{i,k}$, and recall the optimal-basis merge error \eqref{eq:err_def}. Then for any partition $\pi$,
\begin{equation}\label{eq:err_sandwich}
  \sigma_{\min}^2\,\Phi(\pi) \;\le\; \mathrm{Err}(\pi) \;\le\; \sigma_{\max}^2\,\Phi(\pi),
\end{equation}
and consequently
\begin{equation}\label{eq:kappa_bound}
  \E\bigl[\mathrm{Err}(\pi_{++})\bigr] \;\le\; 8(\ln\nK+2)\,\kappa^2\cdot\mathrm{Err}(\pi^*_w),
\end{equation}
where $\pi^*_w := \argmin_\pi\mathrm{Err}(\pi)$.
\end{proposition}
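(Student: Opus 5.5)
The plan is to prove the sandwich \eqref{eq:err_sandwich} groupwise and for each fixed orthonormal $B$, then minimize, and finally chain it with part~(1) of \cref{thm:expected_gap}.

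\textbf{Per-task, per-basis comparison.} Fix a task $i$ with thin SVD $\dW_i = U_i\Sigma_iV_i^\top$ and an orthonormal $B\in\R^{\dout\times r}$. As in \eqref{eq:proj_error}, $\fnorm{(I-BB^\top)\dW_i}^2 = \fnorm{(I-BB^\top)U_i\Sigma_i}^2 = \sum_{k=1}^{r}\sigma_{i,k}^2\,\|(I-BB^\top)u_{i,k}\|_2^2$, because $V_i$ has orthonormal columns and $\Sigma_i$ is diagonal. Each weight satisfies $\sigma_{\min}^2\le\sigma_{i,k}^2\le\sigma_{\max}^2$, and the unweighted sum $\sum_k\|(I-BB^\top)u_{i,k}\|_2^2 = \fnorm{(I-BB^\top)U_i}^2 = d_c^2(U_i,\mathrm{col}(B))$ by \eqref{eq:principal_angle}. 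Hence
\[
\sigma_{\min}^2\, d_c^2(U_i,B)\;\le\;\fnorm{(I-BB^\top)\dW_i}^2\;\le\;\sigma_{\max}^2\, d_c^2(U_i,B).
\]

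\textbf{Passing to minima.} Sum this over $i\in G_k$. Then take $\min_B$ of all three expressions. The minimum is monotone under pointwise inequalities between functions of $B$, and positive scalars commute with $\min$. This gives $\sigma_{\min}^2\min_C\sum_{i\in G_k} d_c^2(U_i,C)\le \min_B\sum_{i\in G_k}\fnorm{(I-BB^\top)\dW_i}^2\le\sigma_{\max}^2\min_C\sum_{i\in G_k} d_c^2(U_i,C)$. Orthonormal $B$ range exactly over $\mathrm{Gr}(r,\dout)$ via $\mathrm{col}(B)$. Summing over $k$ yields \eqref{eq:err_sandwich}.

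\textbf{The $\kappa^2$ bound.} Part~(1) of \cref{thm:expected_gap}, read through Steps~1--3 of its proof, actually shows the purely geometric statement $\E[\Phi(\pi_{++})]\le 8(\ln\nK+2)\Phi(\pi^*)$. That statement uses only the projectors $P_i$ and not the latent-group model or the uniform $\Sigma_i$, so I will invoke it in that form. Then
\[
\E[\mathrm{Err}(\pi_{++})]\le\sigma_{\max}^2\E[\Phi(\pi_{++})]\le 8(\ln\nK+2)\sigma_{\max}^2\Phi(\pi^*)\le 8(\ln\nK+2)\sigma_{\max}^2\Phi(\pi^*_w)\le 8(\ln\nK+2)\kappa^2\,\mathrm{Err}(\pi^*_w).
\]
The third step uses optimality of $\pi^*$ for $\Phi$. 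The last step uses the lower half of the sandwich at $\pi^*_w$.

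\textbf{Main obstacle.} The main point needing care is the justification that the $k$-means++ guarantee for $\Phi$ holds outside the latent-group model. I will argue this by noting that Steps~1--3 never use $\epsilon$, $\phi_{\min}$, or $\Sigma_i$. A minor caveat is that if some $\dW_i$ has rank below $r$, then $\sigma_{\min}=0$ and the bound is vacuous, which is consistent with the statement.
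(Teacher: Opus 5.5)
Your proposal is correct and matches the paper's proof essentially step for step. The shared steps are:
\begin{itemize}
\item the per-task, per-basis sandwich $\sigma_{\min}^2 d_c^2(U_i,B)\le\fnorm{(I-BB^\top)\dW_i}^2\le\sigma_{\max}^2 d_c^2(U_i,B)$, where your column-wise weighting is just the diagonal form of the paper's Loewner argument on $\tr(\Sigma_i^2 A_i)$;
\item passage to groupwise minima;
\item the same chain through the model-independent guarantee $\E[\Phi(\pi_{++})]\le 8(\ln\nK+2)\,\Phi(\pi^*)$ read off from Steps~1--3.
\end{itemize}
The paper likewise justifies that guarantee only by noting those steps never use the latent-group assumptions.
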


\begin{proof}
Fix any orthonormal $B\in\R^{\dout\times r}$ and any task $i$. Substituting the SVD $\dW_i = U_i\Sigma_i V_i^\top$ and using $V_i^\top V_i = I_r$,
$$\fnorm{(I-BB^\top)\dW_i}^2 = \tr\!\bigl(\dW_i^\top(I-BB^\top)\dW_i\bigr) = \tr\!\bigl(\Sigma_i^2\,A_i\bigr), \qquad A_i := U_i^\top(I-BB^\top)U_i \;\succeq\; 0,$$
with $\tr(A_i) = r - \fnorm{B^\top U_i}^2 = d_c^2(U_i,B)$ by \eqref{eq:principal_angle}. Since $\sigma_{\min}^2 I_r \preceq \Sigma_i^2 \preceq \sigma_{\max}^2 I_r$ in the Loewner order,
\begin{equation}\label{eq:loewner_sandwich}
  \sigma_{\min}^2\,d_c^2(U_i,B) \;\le\; \fnorm{(I-BB^\top)\dW_i}^2 \;\le\; \sigma_{\max}^2\,d_c^2(U_i,B).
\end{equation}
Now fix a partition $\pi$. For the lower bound, let $B^{(k)}_w := \argmin_{B}\sum_{i\in G_k}\fnorm{(I-BB^\top)\dW_i}^2$ achieve $\mathrm{Err}(\pi)$. By the lower side of \eqref{eq:loewner_sandwich},
$$\mathrm{Err}(\pi) \;=\; \sum_{k}\sum_{i\in G_k}\fnorm{(I-B^{(k)}_w B^{(k)\top}_w)\dW_i}^2 \;\ge\; \sigma_{\min}^2\sum_{k}\sum_{i\in G_k}d_c^2(U_i,B^{(k)}_w) \;\ge\; \sigma_{\min}^2\,\Phi(\pi),$$
where the last inequality uses that $\Phi(\pi)$ is the minimum of $\sum_k\sum_{i\in G_k}d_c^2(U_i, C^{(k)})$ over orthonormal $C^{(k)}$.
For the upper bound, let $C^{(k)}_*$ be the chordal Fr\'{e}chet mean of $G_k$ (achieving $\Phi(\pi)$). Since $C^{(k)}_*$ is a feasible (sub-optimal for the weighted problem) basis,
$$\mathrm{Err}(\pi) \;\le\; \sum_{k}\sum_{i\in G_k}\fnorm{(I-C^{(k)}_*C^{(k)\top}_*)\dW_i}^2 \;\le\; \sigma_{\max}^2\sum_{k}\sum_{i\in G_k}d_c^2(U_i,C^{(k)}_*) \;=\; \sigma_{\max}^2\,\Phi(\pi),$$
using the upper side of \eqref{eq:loewner_sandwich}. This is~\eqref{eq:err_sandwich}.

For \eqref{eq:kappa_bound}, let $\pi^*_w := \argmin_\pi\mathrm{Err}(\pi)$ and let $\pi^* := \argmin_\pi\Phi(\pi)$. Steps 1--3 of the proof of \cref{thm:expected_gap}~(1) are model-independent: they show $\E[\Phi(\pi_{++})]\le 8(\ln\nK+2)\,\Phi(\pi^*)$ for any data $\{U_i\}$, using only the $k$-means++ guarantee \citep{arthur2007kmeans} and the Euclidean--Grassmannian comparison. Combining this with \eqref{eq:err_sandwich},
\begin{align*}
  \E\bigl[\mathrm{Err}(\pi_{++})\bigr] \;\le\; \sigma_{\max}^2\,\E[\Phi(\pi_{++})] &\;\le\; \sigma_{\max}^2\cdot 8(\ln\nK+2)\,\Phi(\pi^*)\\
  &\;\le\; 8(\ln\nK+2)\,\kappa^2\,\sigma_{\min}^2\,\Phi(\pi^*_w) \;\le\; 8(\ln\nK+2)\,\kappa^2\,\mathrm{Err}(\pi^*_w),
\end{align*}
using $\Phi(\pi^*)\le\Phi(\pi^*_w)$ (definition of $\pi^*$) and $\sigma_{\min}^2\Phi(\pi^*_w)\le\mathrm{Err}(\pi^*_w)$ (the lower side of~\eqref{eq:err_sandwich}).
\end{proof}

\subsection{Proof of \cref{thm:rademacher}}
\label{app:rademacher}

\begin{proof}
Let $\epsilon_1,\ldots,\epsilon_n$ be i.i.d.\ Rademacher random variables. Splitting each $h \in \mathcal{H}_a(\beta)$ as $h(x) = a^\top W_0 x + a^\top\dW\, x$, the $W_0$ part is a fixed function of $x$ (independent of $\dW$), so
\begin{equation}
  \frac{1}{n}\sum_i \epsilon_i\,h(x_i) \;=\; \frac{1}{n}\sum_i \epsilon_i\, a^\top W_0 x_i \;+\; \frac{1}{n}\sum_i \epsilon_i\, a^\top \dW\, x_i,
\end{equation}
and the first term factors out of the supremum over $\dW$. Taking expectation over $\epsilon$ and using $\E[\epsilon_i]=0$ to kill the first term,
\begin{equation}
  \hat{\mathfrak{R}}_n(\mathcal{H}_a(\beta))
  \;=\; \E_\epsilon \!\left[\sup_{\fnorm{\dW}\le \beta}\frac{1}{n}\sum_{i=1}^{n} \epsilon_i\, a^\top \dW\, x_i\right].
\end{equation}

\textbf{Step 1: rewrite as a Frobenius inner product.}
For any $\dW$, $a^\top \dW\, x_i = \tr(a^\top \dW x_i) = \langle \dW,\, a x_i^\top\rangle_F$, where $\langle A,B\rangle_F=\tr(A^\top B)$. Linearity gives
\begin{equation}
  \frac{1}{n}\sum_{i=1}^n \epsilon_i\, a^\top \dW\, x_i
  \;=\; \Bigl\langle \dW,\;\frac{1}{n}\sum_{i=1}^n \epsilon_i\, a x_i^\top\Bigr\rangle_F.
\end{equation}

\textbf{Step 2: supremum via Cauchy--Schwarz.}
Let $G_n := \frac{1}{n}\sum_{i=1}^n \epsilon_i\, a x_i^\top \in \R^{\dout\times\din}$. By the Cauchy--Schwarz inequality on the Frobenius inner product,
\begin{equation}
  \sup_{\fnorm{\dW}\le \beta} \langle \dW, G_n\rangle_F \;=\; \beta\,\fnorm{G_n}.
\end{equation}
Moreover, $G_n = a\bigl(\frac{1}{n}\sum_i \epsilon_i x_i^\top\bigr)$ is a rank-one matrix of the form $a v^\top$ with $v = \frac{1}{n}\sum_i \epsilon_i x_i$, so
$\fnorm{G_n} = \fnorm{a v^\top} = \|a\|_2\,\|v\|_2 = \|v\|_2$ (using $\|a\|_2=1$).

\textbf{Step 3: bound $\E\|v\|_2$ via Jensen and independence.}
By Jensen's inequality,
\begin{equation}
  \E_\epsilon \|v\|_2 \;\le\; \sqrt{\E_\epsilon \|v\|_2^{\,2}}
  \;=\; \sqrt{\E_\epsilon \frac{1}{n^2}\sum_{i,j}\epsilon_i \epsilon_j\, x_i^\top x_j}
  \;=\; \sqrt{\frac{1}{n^2}\sum_{i=1}^{n} \|x_i\|_2^{\,2}}
  \;\le\; \frac{R}{\sqrt{n}},
\end{equation}
where the cross-terms vanish because $\E[\epsilon_i\epsilon_j]=\delta_{ij}$.

\textbf{Step 4: combine.}
Taking expectation over $\epsilon$,
\begin{equation}
  \hat{\mathfrak{R}}_n(\mathcal{H}_a(\beta))
  \;=\; \beta\, \E_\epsilon \fnorm{G_n}
  \;\leq\; \beta\,\frac{R}{\sqrt{n}},
\end{equation}
which is~\eqref{eq:rademacher}.
\end{proof}

\paragraph{Remark on rank.}
The bound makes no use of $\rank(\dW)\le r$, so it applies uniformly to any Frobenius-bounded linear adapter, low-rank or not. A rank constraint would buy a sharper bound only if it were combined with a nuclear-norm constraint $\|\dW\|_*\leq\sqrt{r}\beta$ via standard low-rank Rademacher arguments; we do not pursue this since for our purposes the rank-free bound already suffices to show that grouping does not enlarge the hypothesis class.

\section{Additional Experimental Details}
\label{app:exp_details}

\begin{table}[t]
  \centering
\caption{%
  Sensitivity to the number of groups $\nK$ on Flan-T5-Base and Flan-T5-Large
  / GLUE; each cell reports Base\,/\,Large. Partitions come from the same clustering GRAM uses; every group is re-merged with the
  adaptive-rank merge ($r{=}16$) and freshly evaluated with the
  \Cref{tab:glue_compact} pipeline, which reproduces the published $\nK{=}2$
  row exactly on Large (on Base, re-merged $\nK{=}1$ scores 81.9 vs.\ 81.7 for
  the original checkpoint---SVD tie-breaking on near-degenerate spectra). The
  Weight Averaging row and the Large $\nK{\in}\{1,8\}$ anchors are the
  corresponding \Cref{tab:glue_compact} rows.
}
\label{tab:k_sweep}
\scriptsize
\setlength{\tabcolsep}{3pt}
\renewcommand{\arraystretch}{1.0}
\resizebox{\textwidth}{!}{
\begin{tabular}{l c c c c c c c c c r}
\toprule
$\nK$ & CoLA & MNLI & MRPC & QNLI & QQP & RTE & SST-2 & STS-B & Avg. & Extra params \\
\midrule
1 (weight averaging) & 69.7 / 74.6 & 59.7 / 84.3 & 78.9 / 84.1 & 90.1 / 92.8 & 83.8 / 86.2 & 80.5 / 85.9 & 91.2 / 94.8 & 72.0 / 88.0 & 78.2 / 86.3 & --- \\
1 (global merge)     & 69.1 / 79.2 & 74.2 / 86.0 & 80.4 / 87.3 & 89.7 / 93.6 & 83.5 / 85.6 & 80.9 / 87.0 & 92.7 / 95.3 & 84.6 / 88.7 & 81.9 / 87.8 & --- \\
\textbf{2 (GRAM)}    & 69.5 / 79.3 & 82.8 / 89.0 & 84.3 / 88.7 & 90.2 / 93.8 & 84.0 / 85.7 & 79.8 / 88.1 & 93.2 / 95.4 & 86.7 / 89.9 & \textbf{83.8} / \textbf{88.7} & $+1.8$M / $+4.7$M \\
3                    & 69.5 / 79.3 & 82.7 / 88.5 & 84.3 / 88.7 & 90.2 / 93.8 & 84.0 / 85.7 & 79.8 / 88.1 & 93.2 / 95.4 & 87.4 / 90.9 & 83.9 / 88.8 & $+3.5$M / $+9.4$M \\
4                    & 69.1 / 79.9 & 82.7 / 88.5 & 84.8 / 89.5 & 90.2 / 93.8 & 83.7 / 85.8 & 78.0 / 88.4 & 93.2 / 95.2 & 87.4 / 90.9 & 83.6 / 89.0 & $+5.3$M / $+14.2$M \\
5                    & 69.1 / 80.1 & 82.7 / 88.5 & 84.8 / 89.5 & 90.2 / 93.8 & 83.7 / 85.8 & 78.0 / 88.4 & 92.9 / 95.2 & 87.4 / 90.9 & 83.6 / 89.0 & $+7.1$M / $+18.9$M \\
6                    & 69.1 / 80.1 & 82.7 / 88.5 & 83.3 / 89.5 & 90.9 / 93.9 & 83.5 / 85.8 & 79.8 / 90.6 & 92.9 / 95.2 & 87.4 / 90.9 & 83.7 / 89.3 & $+8.8$M / $+23.6$M \\
7                    & 69.1 / 80.1 & 82.7 / 88.5 & 83.1 / 88.7 & 90.9 / 94.4 & 84.1 / 87.1 & 79.1 / 88.8 & 92.9 / 95.2 & 87.4 / 90.9 & 83.7 / 89.2 & $+10.6$M / $+28.3$M \\
8 (no merging)       & 69.1 / 80.1 & 82.7 / 88.5 & 85.5 / 89.2 & 90.9 / 94.4 & 84.1 / 87.1 & 84.5 / 91.3 & 92.9 / 95.2 & 87.4 / 90.9 & 84.6 / 89.6 & $+12.4$M / $+33.0$M \\
\bottomrule
\end{tabular}
}
\end{table}

\subsection{Grouping Analysis}
\label{app:grouping_analysis}

\begin{table}[t]
\centering
\caption{%
  GRAM vs.\ random grouping quality. Intra-sim and Inter-sim denote the average
  subspace similarity ($1 - d_c^2/r$) within and between groups, respectively.
  Ratio = Intra-sim / Inter-sim. Higher ratio indicates better separation.
}
\label{tab:grouping_analysis}
\setlength{\tabcolsep}{6pt}
\begin{tabular}{l c c c c}
\toprule
Setting & Intra-sim & Inter-sim & Ratio & Avg Acc \\
\midrule
No grouping ($\nK{=}1$, global merge) & 0.097 & --- & --- & 81.7 \\
\midrule
GRAM $\nK{=}2$ & 0.143 & 0.035 & 4.06 & 83.8 \\
Random $\nK{=}2$   & 0.089 & 0.102 & 0.87 & 82.4 \\
\midrule
GRAM $\nK{=}3$ & 0.150 & 0.036 & 4.21 & 83.9 \\
Random $\nK{=}3$   & 0.119 & 0.089 & 1.34 & 83.1 \\
\bottomrule
\end{tabular}
\end{table}

\Cref{tab:grouping_analysis} quantifies the grouping quality produced by GRAM
versus random assignment. GRAM with $\nK{=}2$ achieves an intra-group to inter-group
similarity ratio of 4.06, compared to 0.87 for random grouping---a factor-of-four difference. This
separation increases slightly to 4.21 at $\nK{=}3$. Across the four grouped settings, higher ratio correlates with better downstream accuracy: geometric coherence within groups translates directly to merge quality.

Random grouping at $\nK{=}2$ produces groups whose intra-group similarity (0.089) is actually
\emph{lower} than the global average (0.097), because some random partitions place highly
dissimilar tasks together. GRAM avoids this failure mode by construction.

\subsection{Deployment without Oracle Task Identity}
\label{app:router}

GRAM targets task-aware deployments---e.g., a multi-institution medical system where each request carries its provenance, so task identity is deployment \emph{metadata} rather than an inference problem. Nevertheless, GRAM extends to the task-agnostic setting at no accuracy cost via a lightweight router. We instantiate the Mahalanobis task mapper of \citet{lee2018unified}, as deployed for task recognition in shared-knowledge lifelong learning (SKILL) \citep{ge2023lightweight}: class-conditional Gaussians in the \emph{frozen base encoder's} feature space (mean-pooled hidden states; no extra model), class means estimated from only $m{=}5$ training samples per class ($\approx 85$ raw examples total across all 8 tasks), and one tied covariance (Ledoit--Wolf). A test input is routed to the group owning its nearest class centroid. The router involves no gradient training and no test-time task labels; merging itself remains fully data-free, and the router's $\approx 85$-example requirement is strictly weaker than the $64$-samples-\emph{per-task} calibration consumed by the data-using baselines (RegMean, Fisher-Merge) in \cref{sec:qwen_medical}. On Flan-T5-Base / GLUE at $\nK{=}2$, the router routes $100.0\%$ of validation examples of every task to the correct group, so task-agnostic GRAM \emph{equals} oracle GRAM---$83.8$ on every task, to the decimal.

\begin{table}[h]
\centering
\caption{%
  Group-routing accuracy (\%) of the Mahalanobis task mapper under the
  partition chain for every $\nK$, Flan-T5-Base / GLUE. $\nK{=}8$ requires
  full task identification.
}
\label{tab:router_ksweep}
\setlength{\tabcolsep}{6pt}
\begin{tabular}{l c c c c c c c}
\toprule
$\nK$ & 2 & 3 & 4 & 5 & 6 & 7 & 8 (full task ID) \\
\midrule
Group-routing acc. & 100.00 & 100.00 & 100.00 & 100.00 & 99.99 & 99.98 & 99.98 \\
\bottomrule
\end{tabular}
\end{table}

Small $\nK$ is what makes routing easy. The router encodes the same formatted request the instruction-tuned model itself consumes, and it only needs to resolve \emph{groups}, not tasks: at $\nK{=}2$ it faces a binary decision, and any confusion between tasks in the same coalition is harmless by construction. \Cref{tab:router_ksweep} reports the mapper's group-routing accuracy under the partition chain for every $\nK$. The mapper's \emph{only} errors, at any $\nK$, are QQP$\leftrightarrow$MRPC confusions ($72$ of $40{,}430$ QQP examples, $0.18\%$)---two semantically adjacent paraphrase-detection tasks. These errors are invisible for $\nK \le 5$ because GRAM's geometry-aware clustering places QQP and MRPC in the \emph{same} coalition, and only surface once $\nK \ge 6$ separates them. This reveals a structural synergy: a router's confusions occur precisely between similar tasks, and similar tasks are precisely what GRAM groups together---the coalition structure absorbs router errors by design. A per-task deployment ($\nK{=}8$) forgoes this protection and demands full task identification, exactly the harder regime studied in SKILL ($87.1\%$ over 102 tasks) \citep{ge2023lightweight}.

\subsection{Per-Task Intrinsic Rank Heterogeneity}
\label{app:rank_heterogeneity}

\Cref{fig:singular_value_count} reports the number of singular values needed to capture $\ge 99\%$ of the per-layer Frobenius energy, computed over the 8 GLUE tasks and 72 LoRA layers of Flan-T5-Base. Uniform per-task allocation $r/|G_k|$ inside a coalition either wastes slots on compact-spectrum tasks or truncates long-tail ones; the adaptive allocator of \cref{sec:group_merge} reads off this heterogeneity directly from the singular values and assigns rank where it actually reduces error.

\begin{figure}[h]
\centering
\includegraphics[width=\linewidth]{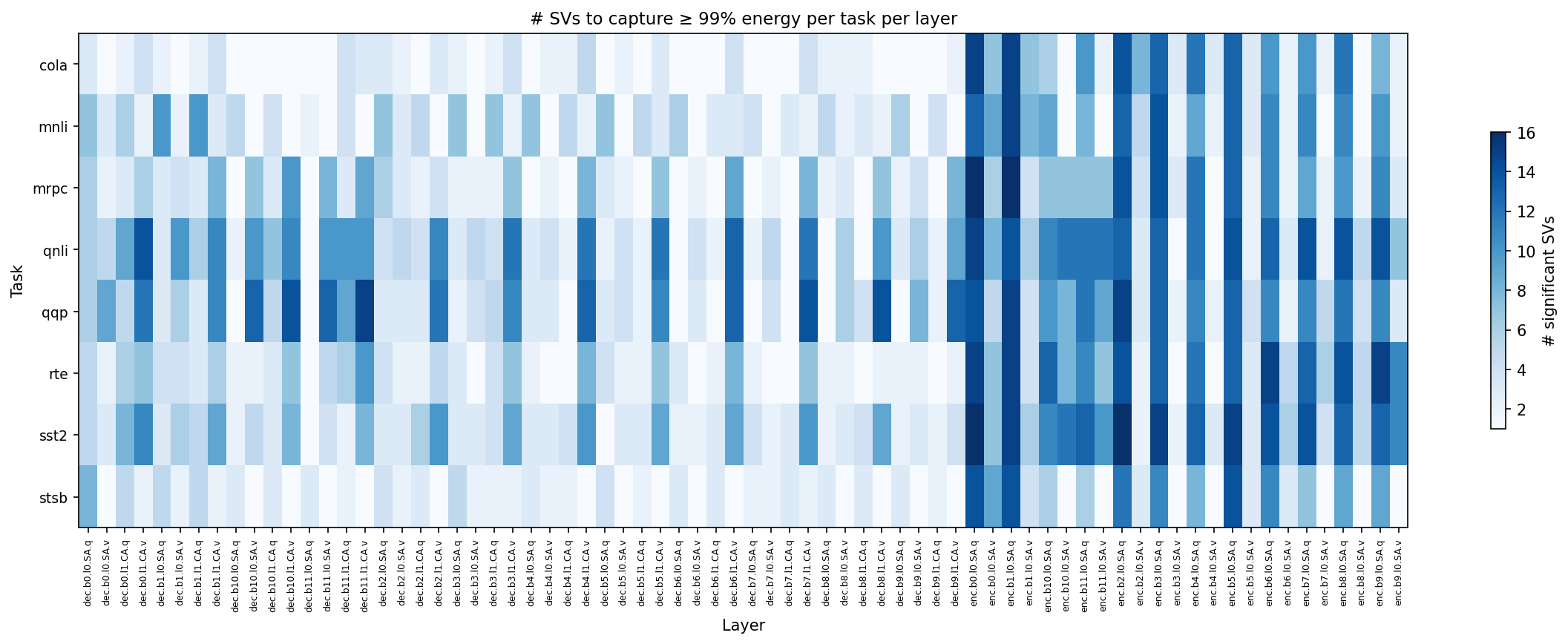}
\caption{Number of singular values required to capture $\ge 99\%$ of Frobenius energy, per task (rows) and per LoRA layer (columns), on Flan-T5-Base. Decoder layers (left half) are typically low-rank ($1$--$4$ significant directions); encoder layers (right half) are higher-rank ($10$--$16$). The cross-task and cross-layer heterogeneity motivates the adaptive rank allocator in \cref{sec:group_merge}.}
\label{fig:singular_value_count}
\end{figure}

\subsection{Pairwise Projection Error on ViT-B/32}
\label{app:vit_proj_error}

\Cref{fig:sigma_chordal_k1_vs_k2} visualizes the pairwise projection error $\sum_s\sigma_{i,s}^2\sin^2\theta_s$ between task subspaces on ViT-B/32 under $\nK{=}1$ (TSV-Merge) vs.\ $\nK{=}2$ (GRAM); the block structure that emerges at $\nK{=}2$ is described in the caption.

\begin{figure}[h]
\centering
\includegraphics[width=\linewidth]{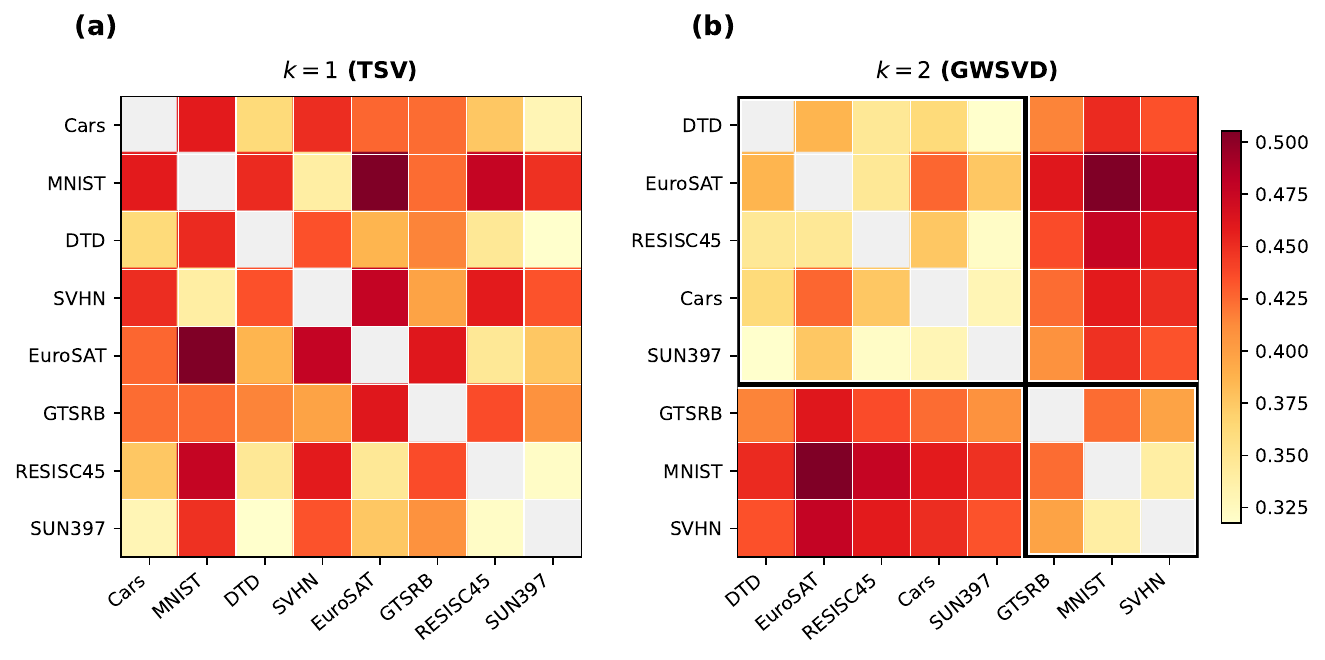}
\caption{Pairwise projection error $\sum_s \sigma_{i,s}^2\sin^2\theta_s$ between task subspaces on ViT-B/32, under $\nK{=}1$ (TSV-Merge, left) and $\nK{=}2$ (GRAM, right). Reordering at $\nK{=}2$ by GRAM's partition exposes a block-diagonal structure: within-group pairs incur lower error than across-group pairs, consistent with \cref{lem:alignment_penalty}.}
\label{fig:sigma_chordal_k1_vs_k2}
\end{figure}

\subsection{CLIP ViT-B/16 Full Fine-tuning Results}
\label{app:vit_b16_ft}

\begin{table}[t]

  \centering
\caption{%
  Eight-task vision benchmark results (test accuracy)
  using CLIP ViT-B/16 with full fine-tuning.
  \textbf{Bold}: best merged result per column;
  \underline{underline}: second best.
  Individual adapters use one task-specific model per task (oracle upper bound).
}
\label{tab:clip_vit_b16_ft}
\resizebox{\textwidth}{!}{
\begin{tabular}{l r r r r r r r r r}
\toprule
Method & MNIST & SVHN & GTSRB & DTD & Cars & RESISC45 & EuroSAT & SUN397 & Avg. \\
\midrule
Individual (oracle) & 99.8 & 97.8 & 99.0 & 98.3 & 87.5 & 97.0 & 99.8 & 78.8 & 94.8 \\
\midrule
Weight Averaging     & 94.4 & 72.6 & 60.6 & 54.0 & 69.9 & 75.2 & 81.3 & 68.6 & 72.1 \\
Task Arithmetic      & 98.3 & 84.5 & 76.1 & 60.4 & 68.1 & 73.7 & 86.4 & 64.4 & 76.5 \\
TIES-Merging         & 98.3 & 84.8 & 73.9 & 64.8 & 71.7 & 77.8 & 89.4 & 68.0 & 78.6 \\
AdaMerging++         & 98.4 & 93.1 & 92.3 & 64.0 & 79.2 & 81.1 & 95.6 & 69.2 & 84.1 \\
TSV-Merge            & \underline{99.3} & 92.9 & 94.5 & 85.2 & 81.0 & 88.5 & 97.4 & 70.6 & 88.7 \\
Iso-C                & 99.1 & 90.7 & 94.9 & 89.9 & 82.6 & \underline{91.6} & 97.4 & \underline{73.5} & 90.0 \\
Iso-CTS              & 99.2 & 90.4 & \underline{96.2} & \textbf{92.3} & \textbf{83.2} & \textbf{92.8} & \underline{97.7} & \textbf{74.1} & \underline{90.7} \\
LiNeS-Merge          & 98.6 & 85.6 & 79.7 & 64.7 & 72.5 & 77.5 & 87.6 & 65.4 & 79.0 \\
LOT-Merging          & 99.0 & \underline{94.8} & 70.8 & 68.7 & 75.9 & 82.6 & 84.4 & 70.7 & 80.9 \\
\midrule
\textbf{GRAM} (ours)          & \textbf{99.5} & \textbf{96.0} & \textbf{96.8} & \underline{92.2} & \underline{82.7} & 91.4 & \textbf{98.8} & 72.0 & \textbf{91.2} \\
\bottomrule
\end{tabular}
}
\end{table}

For completeness across the standard CLIP backbone family, \cref{tab:clip_vit_b16_ft} reports per-task accuracy on the 8-task vision benchmark using a fully fine-tuned CLIP ViT-B/16 checkpoint. GRAM reaches $91.2\%$ average---the highest among all compared methods---outperforming Iso-CTS ($90.7\%$), Iso-C ($90.0\%$), TSV-Merge ($88.7\%$), and the remaining single-model baselines. The win pattern mirrors the B/32 and L/14 results in \cref{tab:clip_vit_ft_compact}: the largest absolute gains over global merging concentrate on the high-chordal-distance tasks (GTSRB, EuroSAT, SVHN), exactly the cells that the alignment-penalty analysis of \cref{lem:alignment_penalty} predicts will benefit most from grouping. The consistency of the gain pattern across three ViT scales (B/32, B/16, L/14) is consistent with the partition-quality mechanism being scale-invariant rather than tied to a specific backbone.

\subsection{Implementation Details}
\label{app:impl_details}

\paragraph{Flan-T5 GLUE adapters.}
On both Flan-T5-Base and Flan-T5-Large we use LoRA with rank $r = 16$ and $\alpha = 32$ (scaling factor $2.0$), applied to the query and value projections in self-attention and (in the decoder) cross-attention.  Each LoRA delta has shape $768 \times 768$ on Base and $1024 \times 1024$ on Large; the number of adapted layers is $\nL = 72$ on Base (12 encoder + 12 decoder blocks) and $\nL = 144$ on Large (24 encoder + 24 decoder blocks).  Per-task adapters are trained with AdamW (learning rate $5\times 10^{-4}$, batch size $16$) for $10$ epochs; we report on the GLUE validation splits, with Spearman $\rho{\times}100$ on STS-B and accuracy on the remaining tasks.

\paragraph{CLIP ViT vision adapters.}
We use the per-task fully fine-tuned CLIP ViT-B/32 and ViT-L/14 checkpoints released by TSV-Merge \citep{gargiulo2025tsv}, evaluated on the standard 8-task vision benchmark (MNIST, SVHN, GTSRB, DTD, Cars, RESISC45, EuroSAT, SUN397).  The ViT-B/16 results in \cref{app:vit_b16_ft} are produced under the same protocol on a fully fine-tuned ViT-B/16 backbone.  Since the per-task checkpoints are fully fine-tuned, the per-layer task delta $\dW_i^{(\ell)} = W_i^{(\ell)} - W_0^{(\ell)}$ is full-rank; we apply SVD truncation at merge time with rank budget $r{=}16$ per layer (matching the GLUE setting), which is the rank actually flowing into the chordal distance and the per-coalition merge.  As stated in \cref{sec:preliminaries}, the matrix formulation applies to linear layers: SVD-based merging is applied to all 2-D weight matrices (MLP blocks and attention projections), while the remaining non-matrix parameters (LayerNorm gains, biases) are merged by simple averaging.  All vision results report test accuracy.  No additional task-specific training is performed; GRAM and all baselines operate on these checkpoints directly.

\paragraph{Qwen2 medical adapters.}
We use four publicly released LoRA adapters on Qwen2-7B-Instruct, one per medical sub-domain: MedQA, CMExam, HealthcareMagic, and cMedQA2.  Each adapter uses LoRA with rank $r = 8$ and $\alpha = 16$, applied to the attention ($q,k,v,o$) and MLP (gate, up, down) projections of every decoder block.  GRAM operates directly on these adapters without further training.

\paragraph{Calibration set for data-using baselines.}
Fisher-Merge \citep{matena2022merging} and RegMean \citep{jin2022dataless} require a small calibration set per task at merge time.  We use a uniform $64$-sample-per-task calibration set drawn from each task's training split: Fisher-Merge uses these samples to estimate per-parameter Fisher information, and RegMean uses them to estimate input second-moment matrices for the closed-form layer-wise solver.  The data-free baselines (Weight Averaging, Task Arithmetic, TIES-Merging, Iso-C, Iso-CTS, TSV-Merge) and GRAM see no calibration data.

\paragraph{Clustering.}
For each model family we compute the $\nT \times \nT$ pairwise chordal distance matrix (averaged across all $\nL$ adapted layers) and run $k$-means++ with $20$ random initializations on the projection-matrix embeddings.  We set $\nK = 2$ as the number of groups throughout.

\paragraph{Computational cost.}
For the Flan-T5-Base/GLUE setup ($\nT{=}8$, $\nL{=}72$), SVD decomposition of all $\nT \times \nL$ task deltas takes approximately $20$ seconds on CPU; distance-matrix construction and $k$-means++ clustering complete in under $1$ second; per-group SVD merging adds negligible overhead.  Total wall-clock time for the GRAM pipeline is under $30$ seconds, compared to hours of adapter training per task.  The other model families exhibit the same qualitative cost structure: SVD dominates and is bounded by the per-layer cost $O(\dout\din r)$ summed over $\nT\nL$ deltas.

\subsection{Medical Benchmark Details}
\label{app:medical_details}

\paragraph{Task suite.}
The medical benchmark covers two task families crossed with two languages:
\begin{itemize}[nosep,leftmargin=*]
  \item \emph{Multiple-choice QA (MCQA).}  MedQA (English) and CMExam (Chinese): the model selects one option from a candidate set; reported metric is exact-match accuracy.
  \item \emph{Doctor--patient dialogue (open-ended generation).}  HealthcareMagic (English) and cMedQA2 (Chinese): the model generates a free-form answer to a patient query; reported metrics are BLEU-4 and ROUGE-1, ROUGE-2, and ROUGE-L (F1) against the reference response.
\end{itemize}
The 4 task families with 1 metric each (MCQA accuracy) plus 4 metrics each (BLEU-4, ROUGE-1/2/L) on the two open-ended tasks yield the $10$ heterogeneous metric columns reported in \cref{tab:qwen_medical_main}.

\paragraph{Summary metrics.}
\emph{Avg Rank} is the mean rank across the $10$ columns, where rank $1$ is the best per-column merging method (the per-task Individual oracle is excluded from ranking).  \emph{Avg Impr} is the per-column relative improvement over the un-adapted Pretrained baseline, averaged across the $10$ columns: for each metric column $j$ and method $M$, we compute $(\mathrm{score}_{M,j} - \mathrm{score}_{\mathrm{Pretrained},j})/\mathrm{score}_{\mathrm{Pretrained},j}$, then average over $j$.  This normalizes for the wide range of metric magnitudes (MCQA accuracy in tens of points; ROUGE-2 in single digits).

\end{document}